\newcommand{\normw}[2]{\|#1\|_{#2}}
\newcommand{\realset}{\mathbb{R}}
\newcommand{\T}{^\top}
\theoremstyle{plain}
\newtheorem{theorem}{Theorem}[section]
\newtheorem{proposition}[theorem]{Proposition}
\newtheorem{lemma}[theorem]{Lemma}
\theoremstyle{definition}
\newtheorem{assumption}[theorem]{Assumption}
\theoremstyle{remark}
\icmltitlerunning{FisherSFT: Data-Efficient Supervised Fine-Tuning of Language Models Using Information Gain}
\newcommand{\beq}{\begin{equation}}
\newcommand{\eeq}{\end{equation}}
\renewcommand\vec[1]{\operatorname{vec}#1}
\newcommand\E{\mathbb{E}}
\newcommand\I{\mathbb{I}}
\newcommand\R{\mathbb{R}}
\newcommand\1{\mathbbm{1}}
\newcommand{\x}{\mathbf{x}}
\renewcommand{\I}{\mathbb{I}}
\newcommand{\cL}{{\cal L}}
\newcommand{\cN}{{\cal N}}
\newcommand{\cS}{{\cal S}}
\newcommand{\cB}{{\cal B}}
\newcommand{\cD}{{\cal D}}
\newcommand{\bI}{\mathbf{I}}
\DeclareMathOperator*{\argmax}{argmax}
\DeclareMathOperator*{\argmin}{argmin}
\theoremstyle{definition}
\newcommand {\commentout}[1] {}
\def\ints{{{\rm Z} \kern -.35em {\rm Z} }}  
\def\smallints{{{\rm Z} \kern -.3em {\rm Z} }}  
\def\pints{{{\rm I} \kern -.15em {\rm N} }}      
\newcommand{\reals}{\mathbb R}
\def\cplx{{{\rm I} \kern -.45em {\rm C} }}       
\def\l2{\rm {\mathcal L}^{2}(\reals)}            
\newcommand{\be}{\begin{eqnarray}}
\newcommand{\ee}{\end{eqnarray}}
\newcommand{\bea}{\begin{eqnarray}}
\newcommand{\eea}{\end{eqnarray}}
\newcommand{\beaa}{\begin{eqnarray*}}
\newcommand{\eeaa}{\end{eqnarray*}}
\newcommand{\bnad}{\begin{nad}}
\newcommand{\enad}{\end{nad}}
\newcommand{\IGNORE}[1]{}
\newcommand{\set}[1] {\{ #1\}}
\let\strokeL\L
\DeclareRobustCommand{\L}{\ifmmode\mathbf{L}\else\strokeL\fi}
\def\ones{{\mathbf{1}}}
\let\trace\relax
\DeclareMathOperator{\trace}{tr}
\newcommand{\sentenceod}{\ensuremath{\color{Green}\tt SentenceOD}\xspace}
\newcommand{\tokenod}{\ensuremath{\color{Green}\tt TokenOD}\xspace}
\newcommand{\uniform}{\ensuremath{\color{Green}\tt Uniform}\xspace}
\newcommand{\density}{\ensuremath{\color{Green}\tt DensitySampling}\xspace}
\newcommand{\cluster}{\ensuremath{\color{Green}\tt ClusteredSampling}\xspace}
\newcommand{\askLLM}{\ensuremath{\color{Green}\tt AskLLM}\xspace}
\newcommand{\FisherSFT}{\ensuremath{\color{Green}\tt FisherSFT}\xspace}
\begin{document}

\twocolumn[
\icmltitle{FisherSFT: Data-Efficient Supervised Fine-Tuning of Language Models Using Information Gain}



\icmlsetsymbol{equal}{*}
\begin{icmlauthorlist}
\icmlauthor{Rohan Deb}{uiuc}
\icmlauthor{Kiran Thekumparampil}{amazon}
\icmlauthor{Kousha Kalantari}{amazon}
\icmlauthor{Gaurush Hiranandani}{typeface}
\icmlauthor{Shoham Sabach}{amazon,technion}
\icmlauthor{Branislav Kveton}{adobe}
\end{icmlauthorlist}

\icmlaffiliation{uiuc}{University of Illinois, Urbana-Champaign \emph{(Work done while interning at Amazon)}}
\icmlaffiliation{amazon}{Amazon}
\icmlaffiliation{typeface}{Typeface}
\icmlaffiliation{technion}{Technion}
\icmlaffiliation{adobe}{Adobe Research}

\icmlcorrespondingauthor{Rohan Deb}{rd22@illinois.edu}

\icmlkeywords{Active Learning, Data Collection}

\vskip 0.3in
]



\printAffiliationsAndNotice{}

\begin{abstract}
Supervised fine-tuning (SFT) is a standard approach to adapting large language models (LLMs) to new domains. In this work, we improve the statistical efficiency of SFT by selecting an informative subset of training examples. Specifically, for a fixed budget of training examples, which determines the computational cost of fine-tuning, we determine the most informative ones. The key idea in our method is to select examples that maximize information gain, measured by the Hessian of the log-likelihood of the LLM. We approximate it efficiently by linearizing the LLM at the last layer using multinomial logistic regression models. Our approach is computationally efficient, analyzable, and performs well empirically. We demonstrate this on several problems, and back our claims with both quantitative results and an LLM evaluation.
\end{abstract}

\section{Introduction}
\label{sec:intro}
\emph{Large language models (LLMs)} \citep{bommasani21opportunities} have emerged as general purpose tools that can match human performance in both zero-shot and few-shot settings \citep{radford2019language,brown20language}. LLMs are typically trained in three stages \citep{ouyang22training}: pre-training on a large corpus of diverse text, supervised fine-tuning in the domain of interest \citep{wei22finetuned}, and alignment to human preferences \citep{ouyang22training,rafailov23direct}. The main challenge in all stages is the sheer scale of LLMs, which increased by four orders of magnitude in just four years: from $117$ million parameters in GPT-2 (2019) to $1.76$ trillion parameters in GPT-4 (2023).

We focus on \emph{supervised fine-tuning (SFT)} \citep{wei22finetuned} in this work. A standard approach in SFT is to optimize a \emph{low-rank adapter (LoRA)} \citep{hu22lora}. The key idea in LoRA is to add low-rank matrices to the matrices in the transformer layers. During fine-tuning, only the low-rank matrices are adapted. Therefore, the computational cost of LoRA is linear in the rank of the low-rank matrices, which naturally trades off the computational cost for the quality of the approximation. The simplicity of LoRA made it popular in practice and thousands of different adapters have been trained \citep{peft}. We propose a complementary approach that selects a subset of most informative training examples for fine-tuning. The computational cost of fine-tuning is linear in the size of the chosen subset. Therefore, as in LoRA, the number of chosen examples naturally trades off the computational cost of fine-tuning for quality.

The idea of selecting better training examples for SFT is not new and has been explored extensively before. Coverage‐based approaches select sufficiently diverse examples to form coresets \cite{phillips2017coresets,tukan2021coresets}. Quality‐based sampling prioritizes weeding out low‐value or unhelpful examples \cite{wenzek2019ccnet,muenchigoff2023scaling}. In ASK-LLM \cite{sachdeva2024how}, a proxy LLM is prompted with a potential training example and asked whether the example should be used for training. We review all of these approaches in detail in \cref{sec:related}. The main difference in our work is that we choose training examples based the log-likelihood of the LLM and thus take their information value into account. 

Without loss of generality, we view training examples in fine-tuning as sentences, each being a sequence of tokens. We want to select the most informative $n$ sentences, which determines the computational cost of fine-tuning. We do this based on the SFT objective. Specifically, note that the last layer of the LLM is a product of next token probabilities. Each probability is represented by a multinomial logistic regression model \citep{bishop06pattern}, where the feature vector is the embedding of all previous tokens. 
Therefore, the problem of selecting the most informative sentences for fine-tuning can be viewed as a variant of an optimal design \citep{Pukelsheim1993OptimalDO,stufken12optimal} for multinomial logistic regression, where tokens in a sentence are chosen jointly based on their information value. We derive an efficient approximation to the Hessian of the LLM log-likelihood, which represents how informative a set of sentences is, and then optimize a lower bound on its log determinant to find the most informative sentences.

We make the following contributions.

\textbf{(1)} We establish a connection between the supervised fine-tuning objective of LLMs and a product of multinomial logistic regression models in \cref{sec:prelim}. 

\textbf{(2)} 
We propose our method in \cref{sec:algo}. Our main technical contribution is a computationally-efficient approximation to the log determinant of the Hessian of the log-likelihood. More specifically, all matrices in this optimization problem are $d \times d$, where $d$ is the size of transformer embeddings, as opposing to $d L \times d L$, where $L$ is the number of distinct tokens. We solve the optimization problem greedily \citep{nemhauser78approximation}, using the monotonicity and submodularity of the objective. At a high level, our algorithm greedily chooses sentences with tokens that are jointly most informative. This is in a stark contrast to treating each sentence as a single data point \citep{das24active,mukherjee24optimal,thekumparampil24comparing,liu24dual,scheid24optimal}, which we compare to in \cref{sec:experiments}.

\textbf{(3)} We analyze our method in \cref{sec:proof}. The main result of our analysis is that the prediction error of our model decreases at rate $\tilde{O}(d L / \sqrt{n})$, where $n$ is the number of chosen sentences. The dependence on $n$ is similar to other recent results in the literature \citep{zhu23principled,mukherjee24optimal,thekumparampil24comparing}.

\textbf{(4)} We evaluate our method empirically in \cref{sec:experiments}. Our experiments are both synthetic and on real-world data with GPT models. We observe that our approach leads to lower prediction errors than the baselines. We also conduct a qualitative evaluation of learned GPT models by a larger LLM.

\section{Problem Formulation}
\label{sec:prelim}
We have a dataset of $N$ sentences indexed by $i \in [N]$. 
A sentence $i$ consists of $M_i$ tokens indexed by $j \in [M_i]$. 
Let $y_{i,j}$ be the $j$-th token in sentence $i$. 
Each token $y_{i,j} \in [L]$ belongs to a vocabulary of size $L$. 
We represent sentence $i$ by the sequence of its tokens,
\[
   y_i \;=\; (y_{i,1},\,y_{i,2},\,\dots,\,y_{i,M_i}),
\]
and denote the entire dataset by $\mathcal{D} = \{\,y_i\}_{i \in [N]}$. To model the evolution of each sentence token-by-token, we define a vector $\mathbf{x}_{i,j} \in \mathbb{R}^d$ that captures the relevant \emph{history} up to the $j$-th token in sentence $i$. In the simplest setting, $\mathbf{x}_{i,j}$ may be a word embedding of $y_{i, j - 1}$. In a large language model, $\mathbf{x}_{i,j}$ could be the output of the pre-logit layer that encodes contextual information about tokens $y_{i,1}, y_{i,2}, \dots, y_{i,j-1}$.

\textbf{Objective: } Our objective is to select an $n$ sized subset $\cS \subset [N]$ of sentences from the dataset $\cD$ and subsequently fine-tune a model using this selected set $\cS$. For fine-tuning an LLM we use pre-logit layer embeddings of sentences to compute this subset $\cS$. 

We denote the parameter matrix by $\Theta_* \in \mathbb{R}^{d \times L}$. Its $\ell$-th column $\theta^{*}_{\ell} \in \mathbb{R}^d$ corresponds 
to the last-layer LLM parameters for token $\ell \in [L]$, i.e., $\Theta_* = (\theta^{*}_{\ell})_{\ell \in [L]}$. 
Under a softmax model, the probability of observing token $\ell$ at 
position $(i,j)$ is given by
\begin{align}
\label{eq:softmax}
   p\bigl(\ell \,\big\vert\, \mathbf{x}_{i,j};\, \Theta_* \bigr)
   \;=\;
   \frac{\exp\!\Bigl(\theta^{*\top}_{\ell}\,\mathbf{x}_{i,j}\Bigr)}
        {\sum\limits_{k=1}^{L} \exp\!\Bigl(\theta^{*\top}_{k}\,\mathbf{x}_{i,j}\Bigr)},
\end{align} 
Under such a softmax model, the goal of an autoregressive model is to learn an estimate of the unknown parameter matrix $\Theta_*$ by minimizing the negative log-likelihood
\begin{align}
\label{eq:loglik}
    \cL(\Theta) = -\frac{1}{N}\sum_{i \in [N]} \sum_{j = 1}^{M_i} \log p(y_{i,j}|\x_{i,j};\Theta).
\end{align}
Our objective is to select an $n$ sized subset $\cS \subset [N]$ of sentences from the dataset $\cD$ and thereafter compute the \emph{maximum likelihood estimate (MLE)} of the parameter $\Theta_*$ on the subset $\cS$, i.e., 
\begin{gather}
    \min_{\Theta}\;\; \cL_{\cS}(\Theta),\nonumber \\
    \text{where}\, \cL_{\cS}(\Theta) := -\frac{1}{n}\sum_{i \in \cS} \sum_{j = 1}^{M_i} \log p(y_{i, j} \mid \x_{i, j}; \Theta).
  \label{eq:loglik_subset}
\end{gather} 
When applied to an LLM fine-tuning, we use the linearized model (with the pre-logit embeddings) only to select the subset $\cS$ and instead of computing an MLE estimate $\hat{\Theta}$, we train all the parameters of the network.

\section{Algorithm}
\label{sec:algo}
The \emph{Fisher information matrix} \cite{fisher22mathematical} corresponds to the Hessian of the negative log likelihood with respect to $\Theta$ and is given by
\begin{align}
  \nabla^2 \cL_\cS(\Theta)
  = -\frac{1}{n}\sum_{i \in S} \sum_{j = 1}^{M_i}
  \nabla^2 \log p(y_{i, j} \mid \x_{i, j}; \Theta)\,.
  \label{eq:hessian}
\end{align}
The Hessian $\nabla^2 \cL_\cS(\Theta)$ can be used to derive the covariance matrix of the MLE of $\cL_\cS(\Theta)$. Therefore, it can be used for both uncertainty quantification and information gathering. Specifically, a high-probability confidence interval on model parameters $\Theta_*$ can be typically derived using $\nabla^2 \cL_\cS(\Theta_*)$ \citep{abbasi-yadkori11improved,lattimore19bandit}. In this work, we optimize $\nabla^2 \cL_\cS(\Theta_*)$ by maximizing all of its eigenvalues with respect to $\cS$, which can be tractably approached as $\log\det(\nabla^2 \cL_\cS(\Theta_*))$ maximization.

This problem is hard for three reasons. First, $\nabla^2 \cL_{\cS}(\Theta_*)$ is a $d L \times d L$ times matrix. Therefore, for practical values of $d \approx 1000$ and $L > 100\,000$, it is computationally costly to optimize it. Second, the exact maximization is impossible because $\Theta_*$ is unknown. To address these two challenges, we derive a lower bound on $\log\det(\nabla^2 \cL_{\cS}(\Theta))$ that only involves $d \times d$ matrices and is $\Theta$-independent. We present the lower bound in the following lemma.

\begin{algorithm}[t!]
  \caption{Greedy Optimal Design \\ for Autoregressive Models.}
  \label{alg:Greedy-OD}
  \begin{algorithmic}[1]
    \State \textbf{Input:} Sentences $\{\x_i = (\x_{i, j})_{j = 1}^{N_i}\}_{i = 1}^{N}$
    \State Design matrix $V \gets I_d$
    \State Selected sentences $\cS \gets \emptyset$
    \For{$t = 1, \ldots, n$}
      \State $\displaystyle
      k \gets \argmax_{i \in [N] \setminus \cS} \,
      \log\det\left(V + \sum_{j = 1}^{M_i} \x_{i, j} \x_{i, j}^\top\right)$
      \State $\cS \gets \cS + \{k\}$
      \State $\displaystyle
      V \gets V + \sum_{j = 1}^{M_k} \x_{k, j} \x_{k, j}^\top$
    \EndFor
    \State \textbf{Output:} $\cS$
  \end{algorithmic}
\end{algorithm}

\begin{lemma}
Consider the loss function described in \eqref{eq:loglik_subset}. Then the Hessian of the loss is given by
\begin{align*}
  \nabla^2 \cL_{\cS}(\Theta) &= \frac{1}{n} \sum_{i \in \cS} \sum_{j = 1}^{M_i} \Big[\mathbf{diag}(p(y_{i,j}| \x_{i,j};\Theta)) \\
  & - p(y_{i,j}| \x_{i,j};\Theta)p(y_{i,j}| \x_{i,j};\Theta)^\top \Big] \otimes \x_{i,j}\x_{i,j}^\top,
\end{align*} 
where $\otimes$ is the tensor product. Moreover, if
\begin{align*}
  \mathbf{diag}(p(y_{i,j}| \x_{i,j};\Theta)) - p(y_{i,j}| \x_{i,j};\Theta)p(y_{i,j}| \x_{i,j};\Theta)^\top \!\! \succeq \! \gamma
\end{align*}
holds for some $\gamma > 0$, then
\begin{align*}
  \log\det (\nabla^2\cL_{\cS}(\Theta)) \geq d\log\det \Big(\frac{\gamma}{n} \sum_{i \in S} \sum_{j = 1}^{M_i} \x_{i,j}\x_{i,j}^\top\Big)\,.
\end{align*}
\label{lemma:log-det-lower-bound}
\end{lemma}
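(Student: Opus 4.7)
The plan is to split the proof into two pieces. The Hessian identity is a direct multinomial-logistic-regression computation, and the log-det lower bound follows from PSD monotonicity together with the Kronecker determinant identity.

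For the Hessian identity, I would compute the derivatives of a single-token log-likelihood and then assemble the blocks. Writing $z_\ell = \theta_\ell^\top \x_{i,j}$ and $p_\ell = e^{z_\ell}/\sum_k e^{z_k}$, a standard softmax calculation gives $\nabla_{\theta_\ell}(-\log p(y_{i,j}\mid\x_{i,j};\Theta)) = (p_\ell - \mathbf{1}\{y_{i,j}=\ell\})\x_{i,j}$. Differentiating once more in $\theta_{\ell'}$ yields the $(\ell,\ell')$-block of the per-token Hessian as $(\delta_{\ell,\ell'}p_\ell - p_\ell p_{\ell'})\,\x_{i,j}\x_{i,j}^\top$. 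Stacking these blocks is, by definition, the Kronecker product $(\mathbf{diag}(p_{i,j}) - p_{i,j} p_{i,j}^\top)\otimes \x_{i,j}\x_{i,j}^\top$. Summing over $(i,j)$ and dividing by $n$ produces the first claim of the lemma.

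For the log-det bound, I would exploit PSD monotonicity term-by-term. Writing $A_{i,j} = \mathbf{diag}(p_{i,j}) - p_{i,j}p_{i,j}^\top$ and $B_{i,j} = \x_{i,j}\x_{i,j}^\top$, the Kronecker product of PSD matrices is order-preserving, so the hypothesis $A_{i,j}\succeq \gamma I$ together with $B_{i,j}\succeq 0$ yields $A_{i,j}\otimes B_{i,j} \succeq \gamma\, I \otimes B_{i,j}$. Summing over $i\in\cS$ and $j\in[M_i]$ and dividing by $n$ gives
\begin{equation*}
\nabla^2 \cL_\cS(\Theta) \;\succeq\; I \otimes \Big(\tfrac{\gamma}{n}\sum_{i\in\cS}\sum_{j=1}^{M_i}\x_{i,j}\x_{i,j}^\top\Big).
\end{equation*}
Since $\log\det$ is monotone on the PSD cone, and $\log\det(I\otimes Q) = d\log\det(Q)$ whenever $Q$ is $d\times d$ (equivalently, using the block-diagonal structure of $I\otimes Q$, or the general Kronecker identity $\log\det(P\otimes Q) = \dim(Q)\log\det P + \dim(P)\log\det Q$), the desired inequality follows immediately from the above Löwner bound.

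The main obstacle is the well-known degeneracy of $A_{i,j}$: because $(\mathbf{diag}(p)-pp^\top)\mathbf{1}=0$, the matrix always has a nontrivial kernel, so $A_{i,j}\succeq \gamma I$ cannot literally hold for any $\gamma>0$. The hypothesis therefore has to be read on the orthogonal complement of $\mathbf{1}$, which is the natural subspace after quotienting out the softmax's translation redundancy (e.g., by fixing a reference class). The delicate bookkeeping is to verify that the Kronecker PSD step and the induced Kronecker determinant identity go through on this reduced subspace, so that the dimensional prefactor on the right-hand side comes out as claimed. Once this reparameterization is in place, the remaining algebra is routine.
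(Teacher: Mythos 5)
Your approach mirrors the paper's exactly: compute the per-token softmax Hessian, stack the $(\ell,\ell')$-blocks into a Kronecker product, apply the assumed L\"owner lower bound termwise to obtain $\nabla^2\cL_\cS(\Theta)\succeq I_L\otimes\bigl(\tfrac{\gamma}{n}\sum_{i,j}\x_{i,j}\x_{i,j}^\top\bigr)$, and finish with the Kronecker determinant identity. Your remark about the degeneracy of $A_{i,j}=\mathbf{diag}(p_{i,j})-p_{i,j}p_{i,j}^\top$ is correct and is not addressed in the paper: since $A_{i,j}\mathbf{1}=0$ always holds, the hypothesis $A_{i,j}\succeq\gamma I_L$ cannot be satisfied for any $\gamma>0$, so the lemma is vacuous as stated unless the ordering is read on the orthogonal complement of $\mathbf{1}$ (which is natural given the constraint $\Theta\mathbf{1}=0$ in Assumption~\ref{asmp:feature}). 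Flagging this is a genuine improvement over the paper's own treatment.

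There is, however, a dimensional slip in the final step that you share with the paper. The general Kronecker identity you cite, $\log\det(P\otimes Q)=\dim(Q)\log\det P+\dim(P)\log\det Q$, is correct; applied with $P=I_L$ and $Q=\tfrac{\gamma}{n}\sum_{i,j}\x_{i,j}\x_{i,j}^\top\in\R^{d\times d}$ it gives $\log\det(I_L\otimes Q)=L\log\det Q$, not $d\log\det Q$. Your one-line claim that $\log\det(I\otimes Q)=d\log\det Q$ whenever $Q$ is $d\times d$ therefore contradicts your own general identity --- it would require $I=I_d$, but the identity factor in the Kronecker product here is $L\times L$. The paper makes the corresponding error by stating $\det(A\otimes B)=\det(A)^p\det(B)^q$ for $A\in\R^{p\times p}$ and $B\in\R^{q\times q}$, whereas the correct formula swaps the exponents ($\det(A)^q\det(B)^p$). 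The prefactor in the lemma's conclusion should therefore be the vocabulary size $L$, not the embedding dimension $d$.
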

\begin{proof}
The lemma is proved in \cref{sec:log-det-lower-bound proof}.
\end{proof}

Therefore, instead of maximizing $\log\det(\nabla^2 \cL_{\cS}(\Theta))$, we can maximize $\log \det(\sum_{i \in \cS} \sum_{j = 1}^{M_i} \x_{i,j} \x_{i,j}^\top)$. The last challenge is that we have a combinatorial optimization problem, choose a subset of $n$ sentences out of $N$. Since $\log\det$ is a monotone submodular function, we solve this problem greedily \citep{nemhauser78approximation}.

\subsection{Greedy Optimal Design}

Our greedy algorithm is presented in \cref{alg:Greedy-OD}. We refer to the optimized Hessian as a \emph{design matrix}, because the matrix is used to design the set of chosen sentences. The design matrix is initialized at $V = I_d$ (line 2) and the subset of selected sentences is initialized at $\cS = \emptyset$ (line 3). In each step $t \in [n]$, the algorithm selects the sentence, from the remaining sentences $[N] \setminus \cS$, that maximizes $\log\det$ of the design matrix of the previously chosen sentences (line 5). This sentence has the highest information gain. Intuitively, it contains the most diverse embeddings $\x_{i,j}^\top$ since $\log\det(V)$ can be viewed as the logarithm of the volume of an ellipsoid represented by $V$, and this is maximized when the lengths of all its axes increase equally. After the sentence is chosen, it is added to the current subset of sentences $\cS$ (line 6) and $\sum_{j=1}^{M_i} \x_{k,j} \x_{k,j}^\top$ is added to the design matrix $V$ (line 7).

Note that \cref{alg:Greedy-OD} selects one sentence at a time and each such iteration involves computing $\log\det$ of all remaining sentences (line 5). Such an implementation is clearly impractical. In \cref{sec:fast greedy}, we present a computationally faster algorithm that takes advantage of the submodularity of $\log \det$ and parallelism to produce the same subset of sentences as in \cref{alg:Greedy-OD}.

We are concerned with two variants of \cref{alg:Greedy-OD} in this work. In \cref{sec:proof}, we analyze it in the idealized setting where the pre-logit layer of the LLM is treated as a fixed feature vector. After \cref{alg:Greedy-OD} collects $n$ samples, we use maximum likelihood estimation to compute the estimated model parameters
\begin{align}
  \hat{\Theta}
  = \argmin_{\Theta} \cL_\cS(\Theta)\,,
  \label{eq:theta_hat}
\end{align} 
where $\cL_{\cS}(\Theta)$ is defined in \eqref{eq:loglik_subset}. We argue that $\hat{\Theta}$ approaches $\Theta_*$ as the sample size $n$ increases.

When applied to LLMs, \cref{alg:Greedy-OD} collects $n$ sentences that are used to fine-tune an actual LLM. The embedding of the $j$-th token in sentence $i$ is the output of the pre-logit layer of the LLM, denoted by $\x_{i, j}$.

\subsection{Fast Greedy Optimal Design}
\label{sec:fast greedy}
\begin{algorithm}[t!]
  \caption{\FisherSFT: Fast Implementation of Algorithm~\ref{alg:Greedy-OD}}
  \label{alg:Greedy-OD-fast}
  \begin{algorithmic}[1]
    \State \textbf{Input:} Sentences $\{\x_i = (\x_{i, j})_{j = 1}^{N_i}\}_{i = 1}^{N}$, batch size $B$
    \State Design matrix $V \gets I_d$
    \State Selected sentences $\cS \gets \emptyset$
    \State Cached information gains $g \gets \infty_N$
    \For{$t = 1, \ldots, n$}
      \State $g_{\max} \gets 0$
      \For{$b = 1, \ldots, N / B$}
        \State $\cB \gets \{(b - 1) B + 1, \dots, b B\}$
        \ForAll{$i \in \cB$}
          \If{$g_i > g_{\max}$}
            \State $\displaystyle
            g_i \gets \log\det\left(V +
            \sum_{j = 1}^{M_i} \x_{i, j} \x_{i, j}^\top\right) -$
            \Statex $\hspace{1.16in} \log\det(V)$
          \EndIf
        \EndFor
        \State $g_{\max} \gets \max \, \{g_i\}_{i \in \cB} + \{g_{\max}\}$
      \EndFor
      \State $k \gets \argmax_{i \in [N] \setminus \cS} g_i$
      \State $\cS \gets \cS + \{k\}$
      \State $\displaystyle
      V \gets V + \sum_{j = 1}^{M_k} \x_{k, j} \x_{k, j}^\top$
    \EndFor
    \State \textbf{Output:} $\cS$
  \end{algorithmic}
\end{algorithm}

Now we present a more computationally-efficient variant of \cref{alg:Greedy-OD} that exploits the submodularity of $\log\det$ and parallelism (\cref{alg:Greedy-OD-fast}). Simply put, we implement line 5 in \cref{alg:Greedy-OD} more efficiently, which is correspond to line 13 in \cref{alg:Greedy-OD-fast}.

The key idea is to cache information gains, where $g_i$ is the cached information gain for sentence $i \in [N]$. The gains are initialized as $g_i \gets \infty$ (line 4), updated in line 11, and we act greedily with respect to them in line 13. If the gains were always updated, note that line 13 is equivalent to line 5 in \cref{alg:Greedy-OD}, because the matrix $V$ is a constant in step $t$.

The key insight to efficient updates is that $\log\det$ is monotone and submodular. Therefore, the gains cannot increase as $V$ is updated and thus do not have to be recomputed when they are smaller than the highest tracked gain $g_{\max}$ at any step $t \in [n]$. We exploit this in line 10 and update $g_{\max}$ in line 12. Finally, we update $g_i$ in batches of size $B$ (line 9). This can be done in parallel and results in an additional $O(B)$ speedup. We use this implementation in our experiments and refer to it as \FisherSFT.

\subsection{Proof of \cref{lemma:log-det-lower-bound}}
\label{sec:log-det-lower-bound proof}

    In Section~\ref{sec:grad_Hessian} \cref{prop:grad_hessian} we show that
    \begin{align*}
    \nabla^2 \cL_{\cS}(\Theta) &= \frac{1}{n} \sum_{i \in \cS} \sum_{j = 1}^{M_i} \Big(\mathbf{diag}(p(y_{i,j}| \x_{i,j};\Theta)) \\
    & - p(y_{i,j}| \x_{i,j};\Theta)p(y_{i,j}| \x_{i,j};\Theta)^\top \Big) \otimes \x_{i,j}\x_{i,j}^\top
\end{align*}
Now suppose for some $\gamma > 0$, $\mathbf{diag}(p(y_{i,j}| \x_{i,j};\Theta)) - p(y_{i,j}| \x_{i,j};\Theta)p(y_{i,j}| \x_{i,j};\Theta)^\top \!\succeq \!\gamma$. Then 
\begin{align*}
    \nabla^2 \cL_{\cS}(\Theta) &\succeq \frac{1}{n} \sum_{i \in \cS} \sum_{j = 1}^{M_i} \gamma I_L \otimes \x_{i,j}\x_{i,j}^\top
\end{align*}
where $I_{L}$ is the $L$ dimensional identity matrix. Therefore we have
    $$\det(\nabla^2\cL_{\cS}(\Theta)) \geq  \det \Big(I_{L} \otimes \frac{\gamma}{n} \sum_{i \in S} \sum_{j = 1}^{M_i} \x_{i,j}\x_{i,j}^\top\Big)$$
    Now using the fact that for $A \in \R^{p\times p}$ and $B \in \R^{q\times q}$, $\det(A \otimes B) = \det(A)^p \det(B)^q$ (See Proposition 7.1.11. in \cite{bernstein2009matrix}) we have
    \begin{align*}
        \det (\nabla^2\cL_{\cS}(\Theta)) &\geq \det(I_L)^L\det \Big(\frac{\gamma}{n} \sum_{i \in S} \sum_{j = 1}^{M_i} \x_{i,j}\x_{i,j}^\top\Big)^d \\
        &\geq  \det \Big(\frac{\gamma}{n} \sum_{i \in S} \sum_{j = 1}^{M_i} \x_{i,j}\x_{i,j}^\top\Big)^d
    \end{align*} 
    Finally
    \begin{align*}
        \log \det (\nabla^2\cL_{\cS}(\Theta)) \geq d \log \det \Big(\frac{\gamma}{n} \sum_{i \in S} \sum_{j = 1}^{M_i} \x_{i,j}\x_{i,j}^\top\Big),
    \end{align*}
    completes the proof.

\section{Error Bound}
\label{sec:proof}
Our main \cref{thm:main_error_bound} provides a $O(1 / \sqrt{n})$ bound on the \emph{maximum prediction error} of the estimated parameter $\hat{\Theta}$ constructed using the samples generated by \cref{alg:Greedy-OD}. The \emph{maximum prediction error} is given by the following expression
\begin{align*}
    \max_{i \in [N]} \sum_{j=1}^{M_i} \|\Theta_*^\top\x_{i,j} - \hat{\Theta}^\top\x_{i,j}\|_2.
\end{align*}
Note that the \emph{maximum prediction error} measures $\|\cdot\|_2$, i.e., it is the sum of prediction errors over the whole vocabulary, at the $j$-th token in sentence $i$, and therefore captures the error across all the $L$ words.
We make the following assumption on the feature vectors and the unknown parameters.
\begin{assumption}
    \label{asmp:feature}
    Assume that $\forall i\in [N], j \in [M_i],$ $\|\x_{i,j}\| \leq 1$. Further 
    we assume that the true model parameter $\Theta_*$ satisfies $\Theta_* \in \cB$ where
    \begin{align*}
        \cB := \{\Theta = (\theta_\ell)_{\ell \in [L]}: \theta_{\ell} \in \R^{d}, \|\theta_{\ell}\|_{2} \leq 1, \Theta \1 = 0\}
    \end{align*}
\end{assumption} 

Further we make a diversity assumption on our dataset.
Given any arbitrary subset $\cS \subseteq \cD$, we define
\begin{align}
  {\bar{\Sigma}}_\cS
  &= \sigma_0 \bm{I_d} + \sum_{i \in \cS} \sum_{j = 1}^{M_i} \x_{i,j} \x_{i,j}^\top\;,
  \label{eq:Xi_S}
\end{align} 
\begin{assumption}
\label{ass:diverse-feature-vectors} There exists a constant $\kappa \geq 1$ such that
\begin{align*}
\log\det(I_d &+ \sum_{j=1}^{M_i} \Sigma_{t-1}^{-1/2} x_{i,j}x_{i,j}^\top \Sigma_{t-1}^{-1/2}) \\
& \leq \kappa \log\det(I_d + \sum_{j=1}^{M_{I_t}} \Sigma_{t-1}^{-1/2} x_{I_t,j}x_{I_t,j}^\top \Sigma_{t-1}^{-1/2})
\end{align*}
holds for any $i \in \mathcal{S}_{t-1}$ and $t \in [n]$.
\end{assumption}
\begin{theorem}
\label{thm:main_error_bound}
    Suppose \cref{asmp:feature} and \cref{ass:diverse-feature-vectors} hold. Then for any $\delta>0$, under the softmax model in \eqref{eq:softmax}, with probability $1-\delta$, the maximum prediction error of $\hat{\Theta}$ can be bounded as follows:
    \begin{align*}
    &\max_{i\in [N]} \sum_{j=1}^{M_i} \|\Theta_*^\top\x_{i,j} - \hat{\Theta}^\top\x_{i,j}\|_2 \\
    &\leq C M e^2 L \sqrt{\frac{\sigma_0^{-2} \log\left(1 + \frac{\sigma_0^{-2} n M}{d}\right)}
  {\log(1 + \sigma_0^{-2})}} \sqrt{\frac{d\kappa(d + \log (L/\delta))}{n}}\,.
\end{align*}
where $C>0$ is some global constant.
\end{theorem}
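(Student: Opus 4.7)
The plan is to combine an MLE concentration bound for the multinomial logit with an elliptic-potential argument that exploits \cref{lemma:log-det-lower-bound} and \cref{ass:diverse-feature-vectors}. Let $\Delta := \hat{\Theta} - \Theta_*$, write $\delta_\ell$ for its $\ell$-th column, and let $\Sigma_n := \sigma_0 I_d + \sum_{i \in \cS} \sum_{j=1}^{M_i} \x_{i,j} \x_{i,j}^\top$ denote the regularized design matrix built by \cref{alg:Greedy-OD}. I will decompose the target error by Cauchy--Schwarz into a $\|\cdot\|_{\Sigma_n}$-weighted parameter-error term, which is controlled by MLE analysis, and a $\|\cdot\|_{\Sigma_n^{-1}}$-weighted feature term summed over $j$, which is controlled by an optimal-design argument.

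For the parameter term, I will first use \cref{asmp:feature}, which forces $|\theta_\ell^{*\top}\x_{i,j}| \le 1$ and hence pins each softmax probability at $\Theta_*$ to roughly $[e^{-2}/L,\, e^2/L]$, so $\diag(p)-pp^\top \succeq (e^{-2}/L)\, I_L$ on the identifiable subspace $\{\Theta \mathbf{1} = 0\}$. Combined with \cref{lemma:log-det-lower-bound}, this gives $\nabla^2 \cL_\cS(\Theta_*) \succeq (e^{-2}/(nL))\, I_L \otimes \Sigma_n$, and a standard self-concordance argument extends the same lower bound to a neighborhood of $\Theta_*$ (contributing a second factor of $e$). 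Taylor-expanding the MLE first-order condition $\nabla \cL_\cS(\hat{\Theta}) = 0$ around $\Theta_*$ and concentrating the per-class score with a union bound over the $L$ classes produces $\|\vec(\Delta)\|^2_{\nabla^2 \cL_\cS(\Theta_*)} \lesssim (d + \log(L/\delta))/n$. Chaining these bounds yields $\sum_\ell \|\delta_\ell\|^2_{\Sigma_n} \lesssim e^2 L (d + \log(L/\delta))$, which is where the $e^2 L$ prefactor of the theorem originates.

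For the feature term, fix a sentence $i$. Cauchy--Schwarz in the $\Sigma_n$-inner product gives $\|\Delta^\top \x_{i,j}\|_2^2 \le \|\x_{i,j}\|_{\Sigma_n^{-1}}^2 \sum_\ell \|\delta_\ell\|^2_{\Sigma_n}$, and a second Cauchy--Schwarz over $j \in [M_i]$ reduces the task to bounding $\sum_j \|\x_{i,j}\|_{\Sigma_n^{-1}}^2$ uniformly in $i \in [N]$. This, rather than the MLE step, is the \emph{main technical obstacle}: the elliptic-potential lemma naturally controls this sum only for sentences the greedy rule actually selected. The key tool is \cref{ass:diverse-feature-vectors}: at every step $t$ the information gain from any competitor sentence is within a factor $\kappa$ of the gain realized by the greedy pick $I_t$. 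Using $\Sigma_n \succeq \Sigma_{t-1}$ to pass from $\Sigma_{t-1}^{-1}$ to $\Sigma_n^{-1}$, summing per-step log-det gains telescopes into $\log\det(\Sigma_n) - \log\det(\sigma_0 I_d) \le d \log(1 + \sigma_0^{-2} nM/d)$, and converting log-det gains to quadratic forms through $\log(1+x) \ge x \log(1+\sigma_0^{-2})/\sigma_0^{-2}$ (valid since $\|\x_{i,j}\|_{\Sigma_n^{-1}}^2 \le \sigma_0^{-2}$) yields $\sum_j \|\x_{i,j}\|_{\Sigma_n^{-1}}^2 \lesssim \kappa d \sigma_0^{-2} \log(1 + \sigma_0^{-2} nM/d)/(n \log(1+\sigma_0^{-2}))$. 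Plugging both bounds back into the Cauchy--Schwarz decomposition, absorbing $\sqrt{M_i} \le \sqrt{M}$ and the remaining $\sqrt{L}$ into the $M e^2 L$ prefactor, and taking $\max_{i \in [N]}$ delivers \cref{thm:main_error_bound}.
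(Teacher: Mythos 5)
The overall blueprint you sketch — Cauchy--Schwarz decomposition into a $\|\cdot\|_{\Sigma_n}$-weighted parameter error and a $\|\cdot\|_{\Sigma_n^{-1}}$-weighted design term, MLE strong-convexity for the former, and the $\kappa$-diversity assumption plus the elliptic-potential/telescoping trick for the latter — is indeed what the paper does. But you apply Cauchy--Schwarz differently, and the difference matters. The paper first bounds $\|\Delta^\top\x_{i,j}\|_2 \le \|\Delta^\top\x_{i,j}\|_1 = \sum_\ell |\delta_\ell^\top\x_{i,j}|$ and then pairs with the $\ell_1/\ell_\infty$ H\"older, ending up with Term I $= \sum_\ell \|\delta_\ell\|_{\bar\Sigma_\cS}$ and a bound on $\sup_\ell \|\nabla_\ell \cL_\cS(\Theta_*)\|_{\bar\Sigma_\cS^{-1}}$; the $\ell_1/\ell_1$ cancellation with the strong-convexity lower bound $\frac{e^{-2\alpha}}{L}\bigl(\sum_\ell\|\delta_\ell\|_{\bar\Sigma_\cS}\bigr)^2$ is what makes the $e^2 L$ prefactor appear cleanly. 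You instead keep the $\ell_2$ structure and reduce everything to $\sum_\ell\|\delta_\ell\|^2_{\Sigma_n}$, which requires a different MLE concentration and a different cancellation.

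The concrete gap is your intermediate claim $\|\vec(\Delta)\|^2_{\nabla^2 \cL_\cS(\Theta_*)} \lesssim (d + \log(L/\delta))/n$. The effective dimension of the softmax model is $d(L-1)$, not $d$; the correct scaling is $\lesssim (dL + \log(1/\delta))/n$. Pushing the correct bound through the Hessian lower bound $\nabla^2\cL_\cS(\Theta_*)\succeq \frac{e^{-2}}{nL}\, I_L\otimes\Sigma_n$ gives $\sum_\ell\|\delta_\ell\|^2_{\Sigma_n} \lesssim e^2 L (dL + \log(L/\delta))$, an extra factor of $L$ over your assertion. Your remark about ``absorbing\dots the remaining $\sqrt{L}$ into the $M e^2 L$ prefactor'' is a symptom of this unresolved mismatch: with the over-optimistic concentration the chain lands at $M e\sqrt{L}\sqrt{\kappa d(d+\log(L/\delta))/n}\cdot\sqrt{\text{log-term}}$, which is \emph{smaller} than the theorem by $e\sqrt{L}$, so there is no leftover $\sqrt{L}$ to absorb. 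To make your route rigorous you need either (i) a real proof that the effective dimension is $d$ rather than $dL$ (which I do not think is true here), or (ii) the $\ell_1/\ell_\infty$ pairing the paper uses, which sidesteps the need to bound $\sum_\ell\|\delta_\ell\|^2_{\Sigma_n}$ directly and controls only $\sup_\ell\|\nabla_\ell\cL_\cS(\Theta_*)\|_{\bar\Sigma_\cS^{-1}}\lesssim\sqrt{d+\log(L/\delta)}$ by a per-class bound plus a union bound over $\ell\in[L]$.

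One smaller deviation: you invoke a self-concordance argument to extend the Hessian lower bound to a neighborhood of $\Theta_*$; the paper instead simply assumes the MLE lies in $\cB$ (see the hypothesis ``such that $\hat\Theta\in\cB$'' in the strong-convexity lemma) so that the linear responses stay bounded by $\alpha\le 1$ along the whole segment. Your treatment is arguably more careful, but it is a genuinely different mechanism and you should state explicitly what quantitative self-concordance property of the multinomial loss you rely on, since the factor of $e$ you claim it contributes is doing real work in the final constant.
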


\subsection{Proof Sketch}

Suppose $\cS$ be the subset of $n$ sentences produced by \tokenod. With $\hat{\Theta} = (\hat{\theta}_{\ell})_{\ell \in [L]}$ and ${\Theta_*} = ({\theta}_{\ell}^*)_{\ell \in [L]}$ we can decompose the error as follows:
\begin{align}
    \max_{i \in [N]} & \sum_{j=1}^{M_i} \|\hat{\Theta}^\top \x_{i,j} - \Theta_*^\top \x_{i,j}\|_2 \\
    &\leq \max_{i \in [N]} \sum_{j=1}^{M_i} \sum_{\ell \in [L]} |(\hat{\theta}_{\ell} - \theta_{\ell}^{*})^{\top} \x_{i,j}| \nonumber \\
    &
    \leq \max_{i \in [N]} \sum_{j=1}^{M_i} \sum_{\ell \in [L]} \|\hat{\theta}_{\ell} - \theta_{\ell}^{*}\|_{\bar{\Sigma}_{\cS}} \|\x_{i,j}\|_{\bar{\Sigma}_{\cS}^{-1}} \\
    & \leq \underbrace{\Big( \sum_{\ell \in [L]} \|\hat{\theta}_{\ell} - \theta_{\ell}^{*}\|_{\bar{\Sigma}_{\cS}} \Big)}_{\text{I}}\;\underbrace{\max_{i \in [N]} \sum_{j=1}^{M_i} \|\x_{i,j}\|_{\bar{\Sigma}_{\cS}^{-1}}}_{\text{II}}
    \label{eq:errorDecomposition}
\end{align}
where $\displaystyle \bar{\Sigma}_{\cS} = \sigma_0^2 \bm{I} + \sum_{i \in \cS} \sum_{j = 1}^{M_i} \x_{i,j}\x_{i,j}^\top$. Term I corresponds to the error between the true parameter ${\Theta_{*}}$ and the MLE estimate $\hat{\Theta}$ while term II measures the maximum curvature of $\bar{\Sigma}_{\cS}$.

Let us first consider term II. Under \cref{ass:diverse-feature-vectors} we show that term II is bounded as follows.
\begin{restatable}{lemma}{curvatureProp}
        Suppose \cref{ass:diverse-feature-vectors} holds and $\cS$ be the subset of sentences produced by Algorithm~\ref{alg:Greedy-OD}, $\displaystyle \bar{\Sigma}_{\cS} = \sum_{i \in \cS} \sum_{j = 1}^{M_i} \x_{i,j}\x_{i,j}^\top$ is the covariance matrix constructed using the samples in $\cS$ and $M = \max_{i \in [N]} M_i$. Then
        \begin{align}
        \label{eq:curvature}
            \max_{i \in [N]} \sum_{j=1}^{M_i} \|{\x}_{i,j}\|^{2}_{\bar{\Sigma}_{\cS}^{-1}} \leq \frac{\sigma_0^{-2} \log\left(1 + \frac{\sigma_0^{-2} n M}{d}\right)}
  {\log(1 + \sigma_0^{-2})} \frac{\kappa d M}{n}.
        \end{align} 
        \label{prop:curvatureLemma}
\end{restatable}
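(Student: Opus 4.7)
The plan is to combine three ingredients: an elliptical-potential-style trace-versus-log-det inequality, the diversity hypothesis of \cref{ass:diverse-feature-vectors}, and a telescoping argument over the greedy iterations of \cref{alg:Greedy-OD}. Write $\Sigma_t = \sigma_0^2 I_d + \sum_{s\le t}\sum_{j=1}^{M_{I_s}}\x_{I_s,j}\x_{I_s,j}^\top$ for the running design matrix so that $\Sigma_n = \bar\Sigma_\cS$, and set $A_i^{(t)} = \sum_{j=1}^{M_i}\Sigma_{t-1}^{-1/2}\x_{i,j}\x_{i,j}^\top\Sigma_{t-1}^{-1/2}$ for the normalized contribution of sentence $i$ evaluated at iteration $t$.

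First I would reduce the claim about $\bar\Sigma_\cS^{-1}$ to one about the running covariances. Since $\Sigma_n\succeq\Sigma_{t-1}$ for every $t\in[n]$, the Loewner inequality gives $\|\x_{i,j}\|^2_{\bar\Sigma_\cS^{-1}}\le \|\x_{i,j}\|^2_{\Sigma_{t-1}^{-1}}$, and averaging over $t$ yields
\begin{align*}
\max_{i\in[N]}\sum_{j=1}^{M_i}\|\x_{i,j}\|^2_{\bar\Sigma_\cS^{-1}}
\le \frac{1}{n}\sum_{t=1}^{n}\max_{i\in[N]}\sum_{j=1}^{M_i}\|\x_{i,j}\|^2_{\Sigma_{t-1}^{-1}}.
\end{align*}
The inner sum equals $\trace(A_i^{(t)})$, and every eigenvalue of $A_i^{(t)}$ lies in $[0,M\sigma_0^{-2}]$ because $\|\x_{i,j}\|\le 1$ and $\Sigma_{t-1}\succeq\sigma_0^2 I_d$. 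Since $x/\log(1+x)$ is increasing on $[0,\infty)$, the scalar bound $x\le\tfrac{c}{\log(1+c)}\log(1+x)$ applied eigenvalue by eigenvalue with $c=M\sigma_0^{-2}$ gives
\begin{align*}
\trace(A_i^{(t)})\le \frac{M\sigma_0^{-2}}{\log(1+M\sigma_0^{-2})}\,\log\det\bigl(I+A_i^{(t)}\bigr).
\end{align*}

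Next, \cref{ass:diverse-feature-vectors} supplies $\log\det(I+A_i^{(t)})\le\kappa\log\det(I+A_{I_t}^{(t)})$, while the standard determinant identity $\log\det(I+A_{I_t}^{(t)})=\log\det(\Sigma_t)-\log\det(\Sigma_{t-1})$ turns the sum over $t$ into a telescope equal to $\log\det(\Sigma_n)-\log\det(\Sigma_0)$. The trace bound $\trace(\Sigma_n)\le d\sigma_0^2+nM$ (each of the at most $nM$ rank-one updates has operator norm at most $1$) combined with AM-GM on the eigenvalues of $\Sigma_n$ then yields $\log\det(\Sigma_n)-\log\det(\Sigma_0)\le d\log\bigl(1+nM/(d\sigma_0^2)\bigr)$. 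Assembling the three pieces produces
\begin{align*}
\max_{i\in[N]}\sum_{j=1}^{M_i}\|\x_{i,j}\|^2_{\bar\Sigma_\cS^{-1}}
\le \frac{M\sigma_0^{-2}}{\log(1+M\sigma_0^{-2})}\cdot\frac{\kappa d}{n}\log\!\Bigl(1+\tfrac{\sigma_0^{-2}nM}{d}\Bigr),
\end{align*}
and the mild monotonicity observation $\log(1+M\sigma_0^{-2})\ge\log(1+\sigma_0^{-2})$ (valid for $M\ge 1$) replaces the leading ratio by $M\sigma_0^{-2}/\log(1+\sigma_0^{-2})$, recovering exactly the stated inequality.

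The main delicacy is the second step: one has to bound $\trace(A_i^{(t)})$ from \emph{above} by $\log\det(I+A_i^{(t)})$ so that the log-det-phrased diversity assumption can be invoked without losing more than a multiplicative constant. Using the scalar inequality $x\le \tfrac{c}{\log(1+c)}\log(1+x)$ with $c=M\sigma_0^{-2}$ applied eigenvalue-wise is what controls this loss and is responsible for the factor of $M$ in the final expression; using the sharper constant $c=\sigma_0^{-2}$ (valid only per-token rather than for the sentence-level trace) would require rearranging the telescope into per-token increments and is worth flagging as the technically delicate choice in the argument.
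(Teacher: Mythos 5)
Your proof is correct and follows essentially the same route as the paper's: Loewner monotonicity plus averaging to pass from $\bar\Sigma_\cS^{-1}$ to the running matrices $\Sigma_{t-1}^{-1}$, a trace-to-$\log\det$ inequality, \cref{ass:diverse-feature-vectors}, a telescope of the greedy increments, and the trace/AM-GM bound on $\log\det(\Sigma_n)$. The one place where you deviate is the trace-to-$\log\det$ step: you apply the scalar inequality $x \le \tfrac{c}{\log(1+c)}\log(1+x)$ eigenvalue-by-eigenvalue to $A_i^{(t)}$ with $c = M\sigma_0^{-2}$ and then relax the denominator to $\log(1+\sigma_0^{-2})$, whereas the paper applies it per token with $c=\sigma_0^{-2}$ and uses concavity of $\log\det$ (Jensen) to combine the per-token rank-one $\log\det$'s into $\log\det(I+A_i^{(t)})$, picking up the factor $M_i\le M$ there. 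Both arrive at the identical inequality $\trace(A_i^{(t)})\le \frac{\sigma_0^{-2}M}{\log(1+\sigma_0^{-2})}\log\det(I+A_i^{(t)})$. One small correction to your closing remark: the paper's choice of $c=\sigma_0^{-2}$ does not require rearranging the telescope into per-token increments; the Jensen step on $\log\det$ is precisely what lets it retain the sentence-level telescope while still using the per-token constant.
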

Next we need to control term I in \eqref{eq:errorDecomposition}. To do this we relate term I to the difference between the loss and its first order approximation as below
\begin{align}
     \cL_{\cS}(\hat{\Theta}) &- \cL_{\cS}(\Theta^*) - \langle \nabla \cL_{\cS}(\Theta_*), \hat{\Theta} - \Theta^* \rangle \nonumber \\
     & \overset{(a)}{\leq} - \langle \nabla \cL_{\cS}(\Theta_*), \hat{\Theta} - \Theta^* \rangle \nonumber \\
     &= - \sum_{\ell = 1}^{L}  \nabla_{\ell} \cL_{\cS}(\Theta_*)^\top (\hat{\theta}_{\ell} - \theta_{\ell}^{*}) \nonumber\\
     &\overset{(b)}{\leq} \sum_{\ell = 1}^{L}  \big\|\nabla_{\ell} \cL_{\cS}(\Theta_*)\big\|_{\bar{\Sigma}_{\cS}^{-1}} \|\hat{\theta}_{\ell} - \theta_{\ell}^{*}\|_{\bar{\Sigma}_{\cS}}
     \label{eq:loss_upperbound}
\end{align}
where the dot product between matrices $A$ and $B$ is defined as $\langle A, B \rangle = \sum_{i,j} A_{i,j} B_{i,j}$.
Inequality $(a)$ follows from $\cL_{\cS}(\hat{\Theta}) \leq \cL_{\cS}({\Theta}^*)$ and $(b)$ follows from Cauchy Schwarz inequality.
Next we lower bound $\cL_{\cS}(\hat{\Theta}) - \cL_{\cS}(\Theta^*) - \langle \nabla \cL_{\cS}(\Theta_*), \hat{\Theta} - \Theta^* \rangle$ by showing that the loss is strongly convex at $\Theta^*$ in the following lemma.
\begin{restatable}{lemma}{stronglyConvexLemma}
    Suppose \cref{asmp:feature} holds and $\hat{\Theta}$ be the MLE solution as in \eqref{eq:theta_hat} such that $\hat{\Theta} \in \cB$. Then, there exists some $\alpha < 1$ such that
    \begin{align*}
            \cL_{\cS}(\hat{\Theta}) &- \cL_{\cS}(\Theta^*) - \langle \nabla \cL_{\cS}(\Theta_*), \hat{\Theta} - \Theta^* \rangle \\
            & \geq \frac{e^{-2\alpha}}{L}\bigg(\sum_{\ell} \|\hat{\theta}_{\ell} - \theta^*_{\ell}\|_{\bar{\Sigma}_{\cS}}\bigg)^2.
        \end{align*}
        \label{lemma:stronglyConvexityLemma}
\end{restatable}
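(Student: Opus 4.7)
The left-hand side is the Bregman divergence generated by $\cL_\cS$ at $\Theta^*$. My plan is to expand it to second order along the segment between $\Theta^*$ and $\hat\Theta$, and then lower-bound the resulting Hessian quadratic form by exploiting the identifiability constraint $\Theta\1 = 0$ baked into $\cB$. By Taylor's theorem with Lagrange remainder, there exists $s \in [0,1]$ such that, for $\tilde\Theta := (1-s)\Theta^* + s\hat\Theta$ and $\Delta := \hat\Theta - \Theta^*$,
\begin{align*}
  \cL_\cS(\hat\Theta) - \cL_\cS(\Theta^*) - \langle \nabla \cL_\cS(\Theta^*), \Delta\rangle = \tfrac{1}{2}\langle \Delta,\, \nabla^2 \cL_\cS(\tilde\Theta)\,\Delta\rangle.
\end{align*}
Because $\cB$ is convex (intersection of unit balls with the subspace $\{\Theta\1 = 0\}$) and both $\Theta^*,\hat\Theta \in \cB$, the intermediate $\tilde\Theta$ also lies in $\cB$. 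Combined with \cref{asmp:feature} this gives $|\tilde\theta_\ell^\top \x_{i,j}| \leq \alpha$ for some $\alpha \leq 1$, and hence the uniform softmax lower bound $p(\ell \mid \x_{i,j}; \tilde\Theta) \geq e^{-2\alpha}/L$ for every $(i,j,\ell)$.

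\textbf{Hessian as a sum of variances.} Substituting the Hessian formula from \cref{lemma:log-det-lower-bound} into the quadratic form and contracting the Kronecker product against $\Delta$ on both sides, one obtains
\begin{align*}
  \langle \Delta,\, \nabla^2 \cL_\cS(\tilde\Theta)\,\Delta\rangle = \frac{1}{n}\sum_{i\in\cS}\sum_{j=1}^{M_i}\mathrm{Var}_{p^{(i,j)}}\!\bigl(u^{(i,j)}\bigr),
\end{align*}
where $p^{(i,j)} := p(\cdot\mid\x_{i,j};\tilde\Theta)$ and $u^{(i,j)} \in \R^L$ has coordinates $u^{(i,j)}_\ell := (\hat\theta_\ell - \theta^*_\ell)^\top \x_{i,j}$. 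The constraint $\Theta\1 = 0$ forces $\sum_\ell (\hat\theta_\ell - \theta^*_\ell) = 0$ and therefore $\sum_\ell u^{(i,j)}_\ell = 0$. I now apply the symmetric identity $\mathrm{Var}_p(u) = \tfrac{1}{2}\sum_{\ell,\ell'}p_\ell p_{\ell'}(u_\ell - u_{\ell'})^2$, use the bound $\min_\ell p^{(i,j)}_\ell \geq e^{-2\alpha}/L$, and invoke the centering identity $\sum_{\ell,\ell'}(u_\ell - u_{\ell'})^2 = 2L\|u\|^2$ (valid exactly because $\sum_\ell u_\ell = 0$), obtaining
\begin{align*}
  \mathrm{Var}_{p^{(i,j)}}(u^{(i,j)}) \geq \frac{e^{-4\alpha}}{L}\sum_\ell (\delta_\ell^\top \x_{i,j})^2.
\end{align*}

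\textbf{Assembly and main obstacle.} Summing over $(i,j)$ and swapping the order of summation exposes the design matrix: $\sum_{i,j}\sum_\ell (\delta_\ell^\top \x_{i,j})^2 = \sum_\ell \delta_\ell^\top \bigl(\sum_{i,j}\x_{i,j}\x_{i,j}^\top\bigr)\delta_\ell$, which, after absorbing the regularizer $\sigma_0^2 I$, is a constant multiple of $\sum_\ell \|\delta_\ell\|^2_{\bar\Sigma_\cS}$. A final application of the Cauchy--Schwarz inequality $\sum_\ell a_\ell^2 \geq \tfrac{1}{L}(\sum_\ell a_\ell)^2$ with $a_\ell := \|\delta_\ell\|_{\bar\Sigma_\cS}$ then converts the bound into the claimed $(\sum_\ell \|\delta_\ell\|_{\bar\Sigma_\cS})^2/L$ form, with a constant proportional to $e^{-2\alpha}$. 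The main obstacle is sharpening the constant: the factorization $p_\ell p_{\ell'} \geq p_{\min}^2$ is loose, and to reach precisely the $e^{-2\alpha}/L$ prefactor one likely needs to replace the pairwise-difference estimate by the sharper spectral bound that $\mathrm{diag}(p) - pp^\top$, restricted to $\1^\perp$, has smallest eigenvalue of order $p_{\min}$ rather than $Lp_{\min}^2$. Careful bookkeeping of the regularizer $\sigma_0^2 I$ (so that $\sigma_0^2 \sum_\ell \|\delta_\ell\|^2$ can be cleanly absorbed) and of the $1/n$ normalization of $\cL_\cS$ relative to the un-normalized $\bar\Sigma_\cS$ is the other delicate point.
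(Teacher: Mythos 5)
Your proposal follows essentially the same route as the paper's proof: an exact second-order Taylor/Bregman expansion at an intermediate point, a uniform positive-semidefinite lower bound on the multinomial covariance $\mathrm{diag}(p)-pp^\top$ that exploits the centering constraint $\hat{\Theta}\1=\Theta_*\1=0$, conversion of the resulting quadratic form into $\sum_{\ell}\|\hat{\theta}_\ell-\theta^*_\ell\|_{\bar{\Sigma}_{\cS}}^2$, and a final Cauchy--Schwarz step $\sum_\ell a_\ell^2\ge \tfrac{1}{L}\bigl(\sum_\ell a_\ell\bigr)^2$. The one substantive difference is the middle step: you bound the per-token variance via the pairwise identity together with $p_\ell p_{\ell'}\ge p_{\min}^2$, which yields an $e^{-4\alpha}/L$ factor, whereas the paper invokes the spectral bound (Claim~1 of \cite{hajek}), i.e., $\mathrm{diag}(p)-pp^\top\succeq \tfrac{e^{-2\alpha}}{L}I_L-\tfrac{e^{-2\alpha}}{L^2}\1\1^\top$ with the $\1\1^\top$ part annihilated by $\Delta\Theta\,\1=0$ --- exactly the sharpening you identify as the fix; since $\alpha\le 1$, the difference is only a constant factor and does not affect the downstream $O(1/\sqrt{n})$ rate. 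You should also be aware that the bookkeeping issues you flag are not resolved in the paper either: its proof drops the $\tfrac12$ from the Taylor remainder, is loose about the $\sigma_0$ regularizer and the $1/n$ normalization relative to $\bar{\Sigma}_{\cS}$, and in fact terminates at the constant $e^{-2\alpha}/L^2$, which is weaker by a factor of $L$ than the $e^{-2\alpha}/L$ claimed in the lemma statement, so your argument lands essentially where the paper's does.
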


Using Lemma~\ref{lemma:stronglyConvexityLemma} and \eqref{eq:loss_upperbound} we have
\begin{align*}
    \frac{e^{-2\alpha}}{L} & \bigg(\sum_{\ell} \|\hat{\theta}_{\ell} - \theta^*_{\ell}\|_{\bar{\Sigma}_{\cS}}\bigg)^2  \leq \sum_{\ell = 1}^{L}  \big\|\nabla_{\ell} \cL_{\cS}(\Theta_*)\big\|_{\bar{\Sigma}_{\cS}^{-1}} \|\hat{\theta}_{\ell} - \theta^*_{\ell}\|_{\bar{\Sigma}_{\cS}}\\
    &\leq \sup_{\ell \in [L]} \big\|\nabla_{\ell} \cL_{\cS}(\Theta_*)\big\|_{\bar{\Sigma}_{\cS}^{-1}} \bigg(\sum_{\ell} \|\hat{\theta}_{\ell} - \theta^*_{\ell}\|_{\bar{\Sigma}_{\cS}}\bigg)
\end{align*}
and therefore
\begin{align*}
    \bigg(\sum_{\ell} \|\hat{\theta}_{\ell} - \theta^*_{\ell}\|_{\bar{\Sigma}_{\cS}}\bigg) &\leq e^{2\alpha} L \sup_{\ell \in [L]} \big\|\nabla_{\ell} \cL_{\cS}(\Theta_*)\big\|_{\bar{\Sigma}_{\cS}^{-1}}\\
    & \leq e^{2} L \sup_{\ell \in [L]} \big\|\nabla_{\ell} \cL_{\cS}(\Theta_*)\big\|_{\bar{\Sigma}_{\cS}^{-1}}
\end{align*}
The next lemma bounds $\sup_{\ell \in [L]} \big\|\nabla_{\ell} \cL_{\cS}(\Theta_*)\big\|_{\bar{\Sigma}_{\cS}^{-1}}$.

\begin{restatable}{lemma}{gradientBound}
    With probability $1-\delta$ the gradient of the loss satisfies the following bound:
    \begin{align}
    \label{eq:gradient_norm}
        \sup_{\ell \in [L]} \big\|\nabla_{\ell} \cL_{\cS}(\Theta_*)\big\|_{\bar{\Sigma}_{\cS}^{-1}} \leq C \sqrt{{{d + \log(L/\delta)}}}
    \end{align}
    where $C>0$ is some global constant.
\end{restatable}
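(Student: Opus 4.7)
The plan is to recognize $\nabla_\ell\cL_\cS(\Theta_*)$ as a vector-valued martingale sum and apply the standard self-normalized tail inequality of Abbasi-Yadkori, P\'al, and Szepesv\'ari (2011), then union-bound over the $L$ token indices. First I would differentiate the softmax likelihood in~\eqref{eq:softmax} with respect to the $\ell$-th column of $\Theta$ to obtain, up to the $1/n$ prefactor,
\begin{align*}
\nabla_\ell \cL_\cS(\Theta_*)
\;=\;\tfrac{1}{n}\sum_{i\in\cS}\sum_{j=1}^{M_i}\eps_{i,j,\ell}\,\x_{i,j},
\qquad \eps_{i,j,\ell}:=p(\ell\mid\x_{i,j};\Theta_*)-\1\{y_{i,j}=\ell\}.
\end{align*}
Under the true data-generating process each $\eps_{i,j,\ell}$ is conditionally mean-zero and bounded in $[-1,1]$, hence $1$-sub-Gaussian.

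Second I would fix a filtration under which each $\x_{i,j}$ is predictable and each $\eps_{i,j,\ell}$ is a martingale difference. Since \cref{alg:Greedy-OD} chooses $\cS$ as a deterministic function of the feature family $\{\x_{i,j}\}$ alone and never consults labels, I would condition on all features up front and enumerate the token pairs $(i,j)$ with $i\in\cS$ in the order of sentence selection followed by within-sentence position. Letting $\cF_{t-1}$ contain all features together with all labels strictly preceding the $t$-th pair, $\x_{i,j}$ is $\cF_{t-1}$-measurable and $\eps_{i,j,\ell}$ is a $1$-sub-Gaussian martingale difference with respect to $\cF_{t-1}$.

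Third I would apply the self-normalized tail bound to $S_\ell:=\sum_{i\in\cS}\sum_{j=1}^{M_i}\eps_{i,j,\ell}\x_{i,j}$ with regularizer $\sigma_0^2 I_d$: with probability at least $1-\delta'$,
\begin{align*}
\|S_\ell\|_{\bar\Sigma_\cS^{-1}}^2 \;\le\; 2\log\!\bigl(\det(\bar\Sigma_\cS)^{1/2}\det(\sigma_0^2 I_d)^{-1/2}/\delta'\bigr).
\end{align*}
The trace bound $\trace(\bar\Sigma_\cS)\le d\sigma_0^2+nM$ from \cref{asmp:feature} and AM--GM control the determinant by $(\sigma_0^2+nM/d)^d$, giving $\|S_\ell\|_{\bar\Sigma_\cS^{-1}}\le C\sqrt{d+\log(1/\delta')}$ once the slowly-growing factor $\log(1+nM/(d\sigma_0^2))$ is absorbed into $C$. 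A union bound at level $\delta'=\delta/L$ over $\ell\in[L]$ then yields the claimed $\sup_\ell\|S_\ell\|_{\bar\Sigma_\cS^{-1}}\le C\sqrt{d+\log(L/\delta)}$, with the $1/n$ prefactor on $\nabla_\ell \cL_\cS(\Theta_*)$ absorbed through the unnormalized convention of $\bar\Sigma_\cS$ used consistently in \cref{lemma:stronglyConvexityLemma} and in term~II of the error decomposition.

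The main obstacle is justifying the martingale structure under the data-adaptive greedy selection: one must verify that the sequence of features visited during fine-tuning is predictable with respect to a filtration in which the labels retain their true conditional laws. The key point is that line~5 of \cref{alg:Greedy-OD} depends only on features, so conditioning on the feature family up front makes $\cS$ and its induced enumeration order deterministic, leaving in each $\eps_{i,j,\ell}$ only the label noise given its prefix, which is exactly what the mean-zero and sub-Gaussianity properties require. Everything else reduces to a routine invocation of self-normalized concentration and a union bound over the vocabulary.
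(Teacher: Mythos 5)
Your approach is genuinely different from the paper's. The paper's proof first passes to the rescaled matrix $\Sigma_\cS = \tfrac{1}{n}\bar\Sigma_\cS$, uses the spectral bound $X\bar\Sigma_\cS^{-1}X^\top\preceq I$ to reduce $\|\nabla_\ell\cL_\cS\|^2_{\bar\Sigma_\cS^{-1}}$ to the unweighted quantity $\tfrac{1}{n^2}\|V^\ell\|_2^2$ with $V^\ell_{ij}=p(y_{i,j}=\ell\mid\x_{i,j};\Theta_*)-\I(y_{i,j}=\ell)$, and then asserts a Bernstein bound to land on $d+\log(1/\delta)$. You instead treat $S_\ell = \sum_{i\in\cS}\sum_j\eps_{i,j,\ell}\x_{i,j}$ as a vector-valued self-normalized martingale and invoke the Abbasi-Yadkori--P\'al--Szepesv\'ari tail inequality directly, which is the canonical tool for bounding $\|\cdot\|_{\bar\Sigma^{-1}}$-norms of noise sums. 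Your route is cleaner and avoids the paper's unexplained jump from $\tfrac{1}{n}\|V^\ell\|_2^2$ to $\tfrac{C(d+\log(1/\delta))}{n}$, a step that does not obviously follow from Bernstein since $\|V^\ell\|_2^2$ is a sum of $O(nM)$ bounded terms whose mean is not controlled by $d$ in general.

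Two points to tighten. First, the self-normalized bound produces $d\log\bigl(1+\tfrac{nM}{d\sigma_0^2}\bigr)$ rather than $d$, so the ``global constant'' $C$ in the lemma must absorb a factor that grows (slowly) with $n$, $M$, and $\sigma_0^{-1}$; you should make that logarithmic factor explicit rather than hide it inside $C$, since the lemma explicitly calls $C$ global. Second, your closing sentence about the $1/n$ prefactor being ``absorbed through the unnormalized convention of $\bar\Sigma_\cS$'' is not a real cancellation: you have bounded $\|S_\ell\|_{\bar\Sigma_\cS^{-1}}$, whereas $\nabla_\ell\cL_\cS(\Theta_*) = \tfrac{1}{n}S_\ell$, so you actually obtain $\|\nabla_\ell\cL_\cS(\Theta_*)\|_{\bar\Sigma_\cS^{-1}} \leq \tfrac{C}{n}\sqrt{d\log(\cdot)+\log(L/\delta)}$. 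This is of course still within the claimed bound (since $n\geq 1$) and in fact strictly tighter; you should say this plainly rather than appeal to a convention. The discrepancy is no fault of yours---the paper's own proof conflates $\|\cdot\|^2_{\bar\Sigma_\cS^{-1}}$ and $n\|\cdot\|^2_{\Sigma_\cS^{-1}}$ in the wrong direction---but your write-up should not reproduce that confusion.

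One further caveat that applies to both your argument and the paper's: the martingale structure you invoke requires that, conditionally on the information used to build $\cS$, the residuals $\eps_{i,j,\ell}$ remain mean-zero. When $\x_{i,j}$ is a deterministic embedding of the token history $y_{i,1:j-1}$ (as in the synthetic experiments), the feature family and the labels are not independent, so conditioning on ``all features up front'' implicitly conditions on a label-measurable event. A careful treatment would order the tokens by $(i,j)$ and note that $\x_{i,j}$ is $\sigma(y_{i,1:j-1})$-measurable, then handle the label-dependence of $\cS$ separately (for instance via a sample-splitting or a fixed-design assumption); as written, this is a genuine gap that your proof shares with the paper's, and it is worth flagging rather than glossing over with ``$\cS$ depends only on features.''
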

Combining \eqref{eq:gradient_norm}, \eqref{eq:loss_upperbound}, \eqref{eq:curvature} and \eqref{eq:errorDecomposition} we have with probability $1-\delta$
\begin{align*}
    &\max_{i \in [N]}  \sum_{j=1}^{M_i} \|\Theta_*^\top\x_{i,j} - \hat{\Theta}^\top\x_{i,j}\|_2 \\
    & \leq C M e^2 L \sqrt{\frac{\sigma_0^{-2} \log\left(1 + \frac{\sigma_0^2 n M}{d}\right)}
  {\log(1 + \sigma_0^2)}} \sqrt{\frac{d\kappa(d + \log (L/\delta))}{n}},
\end{align*} 
where $C>0$ is some constant, thus completing the proof.

\section{Experiments}
\label{sec:experiments}
\begin{figure*}[t]
  \centering
  \subfigure{\includegraphics[width=0.40\textwidth]{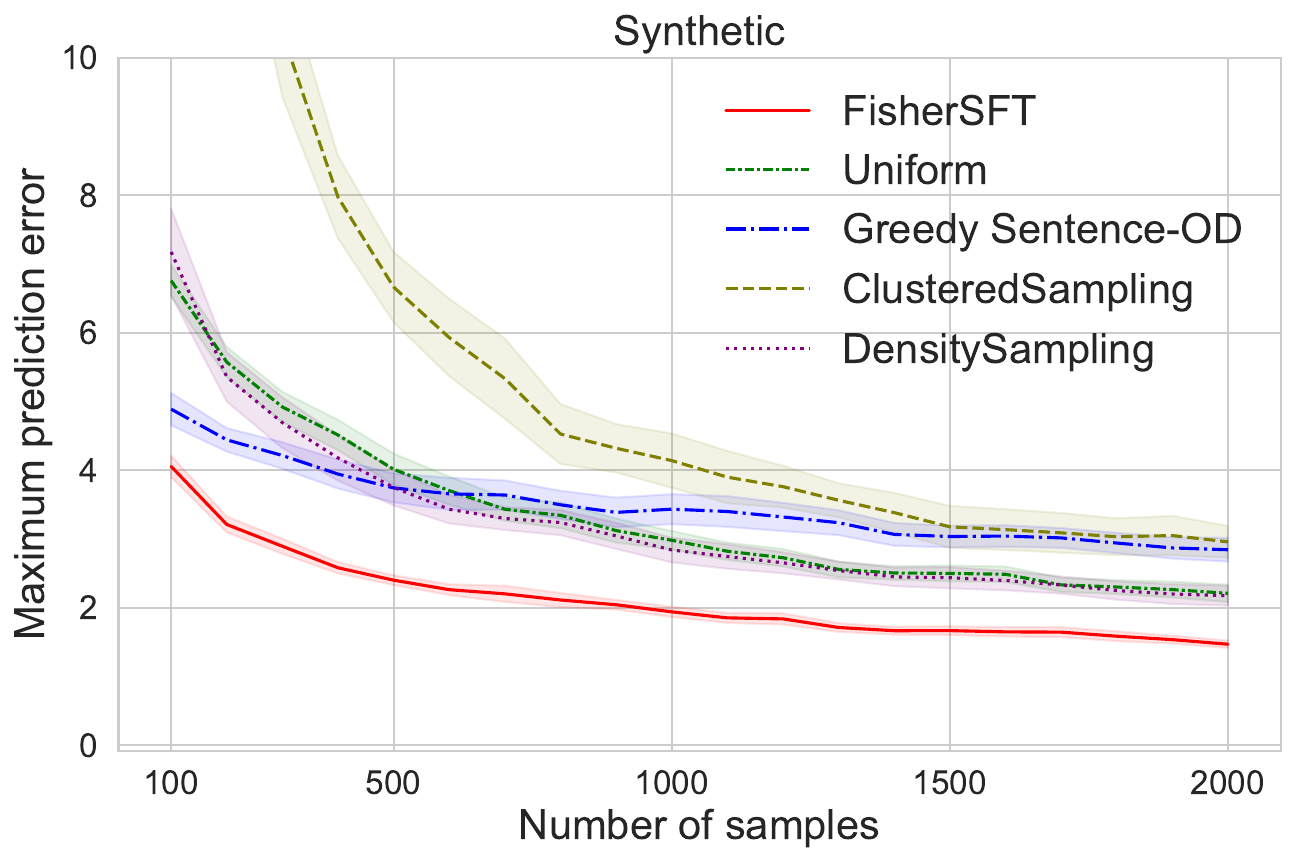}} 
  \subfigure{\includegraphics[width=0.40\textwidth]{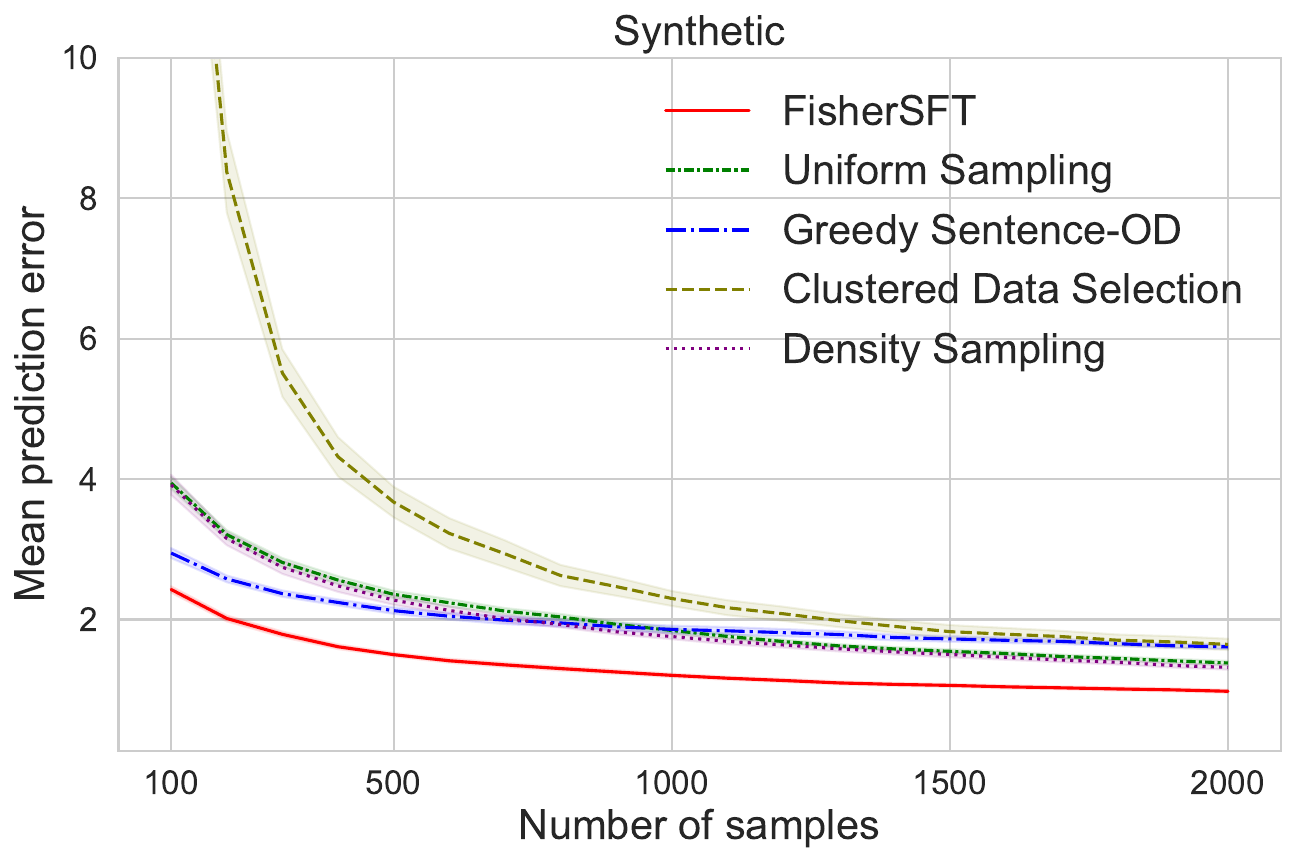}} 
  \caption{Comparison of maximum and mean prediction errors on synthetic token vectors. The x axis shows the number of sentences selected to train the model. The y axis shows the corresponding error averaged over $20$ runs.} 
  \label{fig:1}
\end{figure*}

\begin{figure*}[t]
  \centering
  \subfigure{\includegraphics[width=0.40\textwidth]{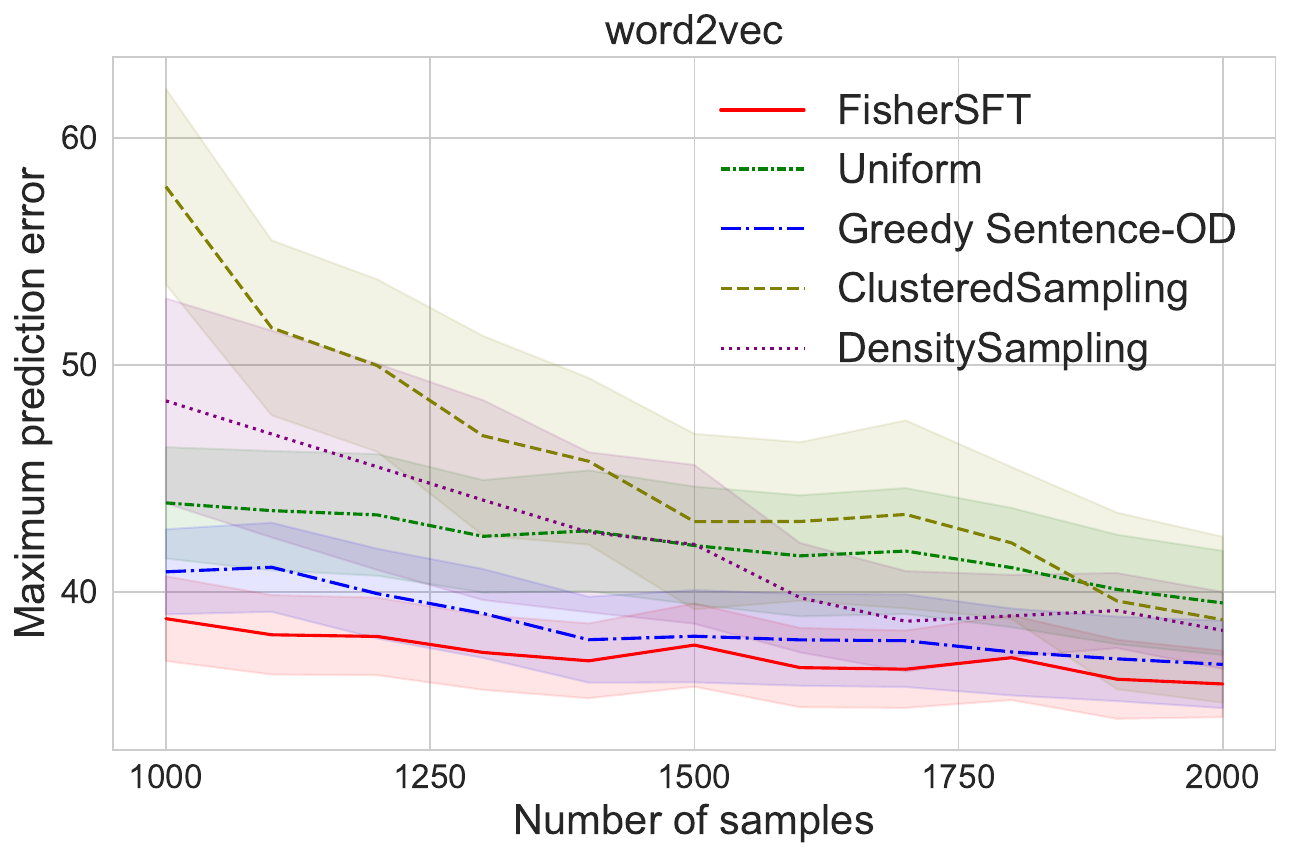}} 
  \subfigure{\includegraphics[width=0.40\textwidth]{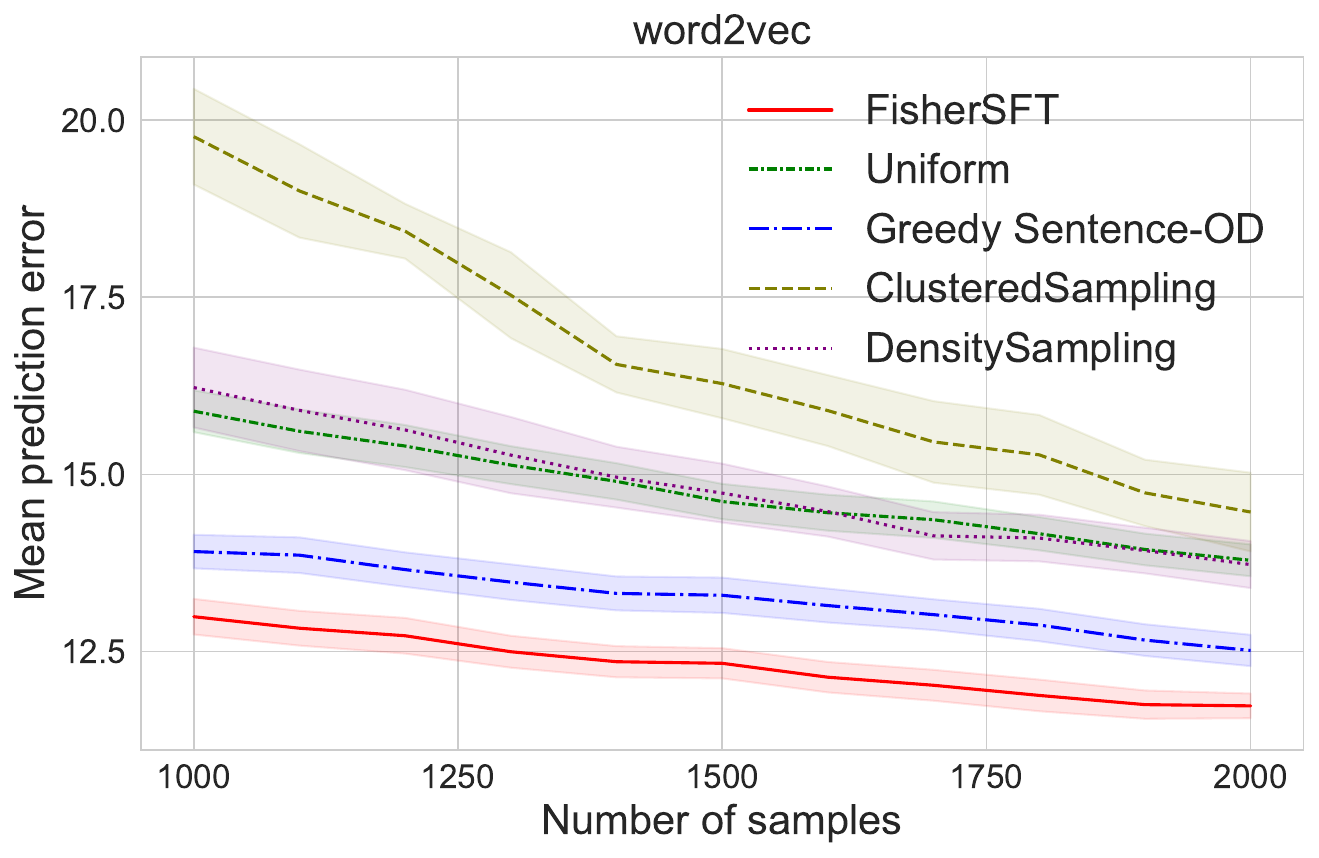}} 
  \caption{Comparison of maximum and mean prediction errors on word2vec token vectors. The x axis shows the number of sentences selected to train the model. The y axis shows the corresponding error averaged over $20$ runs.}
  \label{fig:2}
\end{figure*}

In this section, we empirically evaluate our algorithm and compare it to baselines. We experiment with a synthetic autoregressive prediction task in \cref{subsec:simulated}, with pre-trained word embeddings in \cref{subsec:word2vec}, and with GPT-2 on text dataset in \cref{subsec:gpt2}.

\subsection{Synthetic Experiments}
\label{subsec:simulated}

\begin{figure*}[t]
  \centering
  \subfigure{\includegraphics[width=\textwidth]{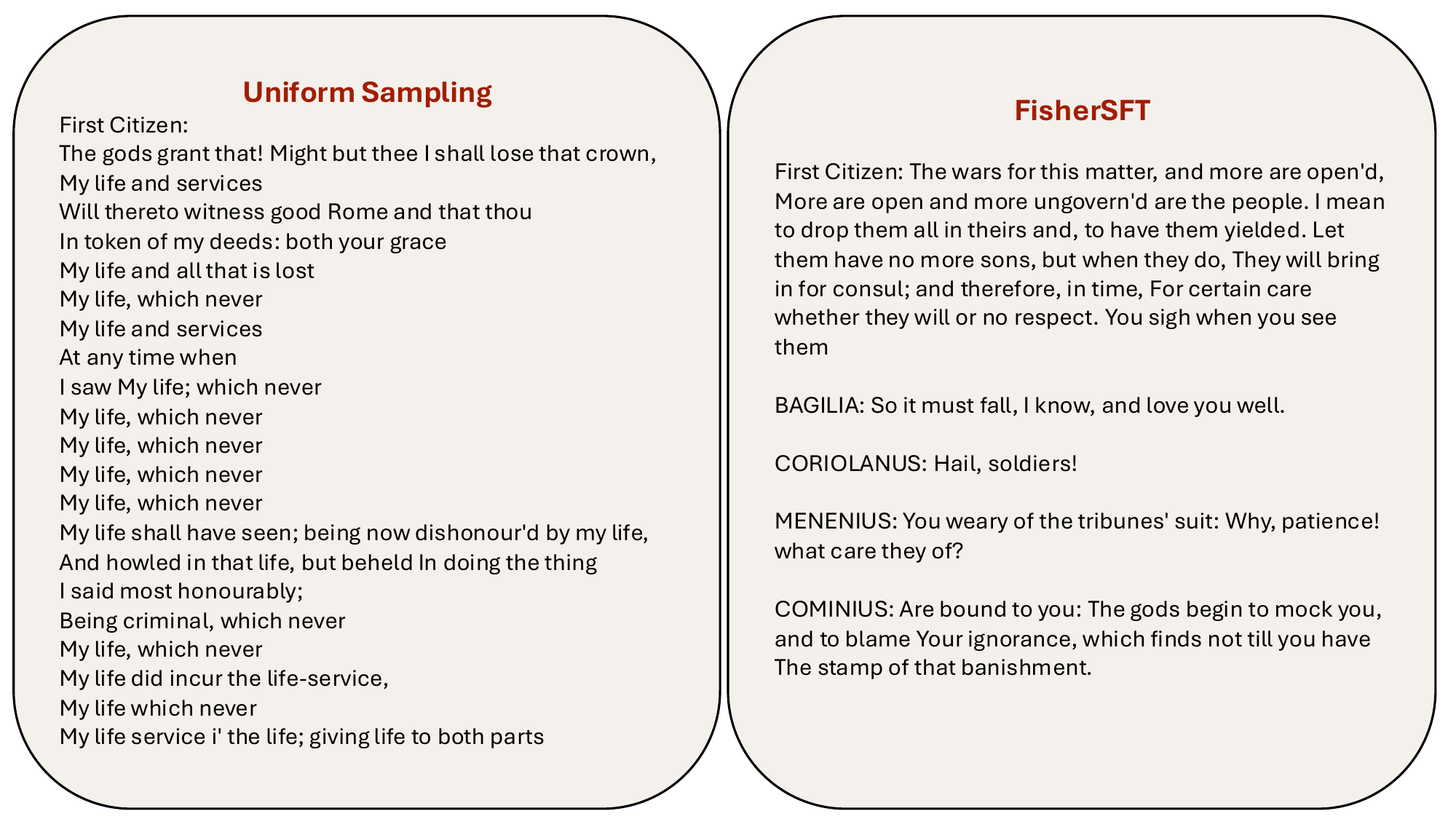}}
  \caption{Text generated by fine-tuned GPT-2 models on sentences selected by \uniform and \FisherSFT. The latter is more coherent.}
  \label{fig:3}
\end{figure*}

We start with a simplified setup where each token $\ell \in [L]$ is associated with a vector sampled from a standard normal distribution, $\x_\ell \sim \cN(\mathbf{0}, I_d)$. The number of tokens is $L = 20$ and $d = 10$. The first token in each sentence is sampled uniformly at random from all tokens. Each next token is sampled the softmax model in \eqref{eq:softmax}, where all entries of $\Theta_*$ are sampled i.i.d.\ from $\cN(0, 1)$.

We compare \FisherSFT to several baselines. \uniform selects sentences uniformly at random. \sentenceod selects sentences greedily by maximizing $\log\det$ of a sentence-level Fisher information matrix. We construct sentence embeddings by summing up the token vectors in the sentences, $\x_i = \sum_{j=1}^{M_i}\x_{i,j}$, where $\x_i$ denotes the vector for sentence $i \in [N]$. \density \citep{sachdeva2024how} uses inverse propensity sampling to select sentences based on a score computed by a kernel density estimate. \cluster \citep{axiotis2024data} clusters the sentence embeddings using $k$-means clustering and then samples them proportionally to their distance to the closest mean plus the mean’s loss. See \cref{sec:related} for more details on the baselines.

All methods are evaluated as follows. After they choose the set of sentences $\cS$, $\hat{\Theta}$ is estimated using multinomial logistic regression. We evaluate the methods by two metrics: \textit{maximum prediction error}
\begin{align*}
  \displaystyle \mathcal{E}_{\max}(n) = \max_{i \in [N]} \sum_{j=1}^{M_i} \|\Theta_*^\top\x_{i,j} - \hat{\Theta}^\top\x_{i,j}\|_2
\end{align*}
and \textit{mean prediction error}
\begin{align*}
  \displaystyle \mathcal{E}_{mean}(n) = \frac{1}N{}\sum_{i \in [N]} \sum_{j=1}^{M_i} \|\Theta_*^\top\x_{i,j} - \hat{\Theta}^\top\x_{i,j}\|_2\,.
\end{align*}
The maximum error measures the performance on the most challenging sentence, while the mean error measures the average performance on all sentences. Note that we bound the maximum error of \FisherSFT in \cref{thm:main_error_bound}.

We plot the errors for our synthetic problem in \cref{fig:1} and observe that \FisherSFT performs better than all baselines in both metrics. In most regimes, \FisherSFT is much more sample efficient than the best baseline. As an example, the lowest maximum error of the best baseline, which is attained at $n = 2\,000$, is attained by \FisherSFT at $n = 1\,000$.

\subsection{Pre-trained Word Embeddings}
\label{subsec:word2vec}

The main difference in this experiment comparing to \cref{subsec:simulated} is that we use pre-trained \texttt{word2vec} embeddings \cite{Mikolov2013EfficientEO} of dimension $300$. We randomly select $L=20$ words from the \texttt{word2vec} vocabulary and project their embeddings randomly to $d = 10$ dimensions. The vector associated with token $\ell \in [L]$ is $\x_\ell$. The rest is the same as in \cref{subsec:simulated}. We report the maximum and mean prediction errors of all methods in \cref{fig:2}. Again \FisherSFT outperforms all baselines from \cref{subsec:simulated} in both metrics. As an example, the lowest mean error of the best baseline, which is attained at $n = 2\,000$, is attained by \FisherSFT at $n = 1\,500$.

\subsection{Experiments with GPT-2}
\label{subsec:gpt2}
\begin{table*}[t]
  \centering
  \caption{Comparison of \FisherSFT against various baseline data sub-sampling strategies when fine-tuning GPT-2 on the Shakespeare dataset.
  Entries show the fraction of times \FisherSFT was preferred over the corresponding baseline. All the fractions being greater than 0.5 implies that \FisherSFT outperforms all the baselines}
  \label{tab:table_combined}
  \label{tab:combined}
  \begin{tabular}{@{}lccccccc@{}}
    \toprule
    \textbf{Sampling strategy} & \multicolumn{7}{c}{\textbf{Number of sentences sub-sampled for finetuning}} \\ 
    \cmidrule(lr){1-8}
    {\FisherSFT vs baseline} & 
    {100} & 
    {200} & 
    {500} & 
    {1000} & 
    {2000} & 
    {5000} \\
    \midrule
    vs \uniform & 0.80 & 0.56 &	0.60 &	0.59 &	0.64 &	0.74 \\
    vs \density & 0.61 & 0.66 &	0.68 &	0.62 &	0.54 &	0.84 \\
    vs \askLLM  & 0.59 & 0.52 &	0.68 &	0.59 &	0.68 &	0.74 \\
    \bottomrule
  \end{tabular}
\end{table*}

\textbf{Model and Datasets:} We experiment with a tiny-Shakespeare corpus \citep{shakespeare}. Our corpus $\cD$ is its subset of $10\,000$ sentences. We actively select $n \in \{50, 100, 200, 500, 1\,000, 2\,000, 5\,000\}$ sentences and then fine-tune a GPT-2 model \citep{radford2019language} on them. We use the Hugging Face implementation of GPT-2 \citep{wolf2020huggingfacestransformersstateoftheartnatural}. 

\textbf{Baselines:} We experiment with \density \citep{sachdeva2024how}, \askLLM \citep{sachdeva2024how}, and \uniform sampling baselines. See \cref{sec:related} for a detailed description of the baselines. We choose \density because it outperforms other methods on language model fine-tuning tasks \citep{sachdeva2024how}.

\textbf{LLM-based Evaluation:} Unlike in \cref{subsec:simulated,subsec:word2vec}, the ground truth model parameter is not available, and thus the maximum and mean prediction errors cannot be computed. Therefore, we judge the quality of the fine-tuned model using a state-of-the-art LLM. Specifically, we generate new text using the fine-tuned model by prompting it with $100$ different phrases from the original dataset. Then we compare the generated text for two methods, say \FisherSFT and \density, by asking a larger GPT-4o model that serves as a judge. We use the following prompt: 
\begin{quote}
\begin{small}
\begin{verbatim}
You are a judge of Shakespeare text.
<tag1>text1</tag1>
<tag2>text2</tag2>
Respond 2 if the text inside <tag2> 
is more fluent Shakespeare 
text than the text inside <tag1>. 
Respond 1 otherwise.
\end{verbatim}
\end{small}
\end{quote} 

The prompt does not name the methods, and targets our perceived benefit (improved language). We use a state-of-the-art LLM GPT-4o to judge. The text generated by the compared methods is randomized: one randomly-chosen method replaces text1 and the other text2. We tested the LLM judge and it chooses the first position with probability 0.54, which is slightly higher than 0.5 for a position-unbiased judge.
When comparing sentences generated by two approaches, we use the same initial phrase in each side-by-side comparison. One example of the outputs generated by the two models is in \cref{fig:3}. Clearly the model trained on uniformly selected sentences generates worse text, which is repetitive. In contrast, the text generated through \FisherSFT is more coherent and similar to the Shakespeare dataset. In \cref{tab:table_combined}, we report the percentage of \FisherSFT being preferred to the three different baselines \density, \askLLM, and \uniform for various samples sizes $n$. For all sample sizes, the text generated by the fine-tuned model on \FisherSFT sentences is preferred to fine-tuned models on sentences generated by the baselines. 

\section{Conclusions}
\label{sec:conclusions}
In this work we developed a method to sample training examples for a fixed budget, that greedily maximizes the log determinant of the Hessian of the log likelihood. We subsequently provide a faster version of the algorithm by leveraging sub-modularity of the problem and provide bounds on the estimation error on the model trained using the collected samples. Finally through experiments on synthetic as well as real world data we evaluate our methods and show that they perform better with lower prediction errors and better quality of sentences generated by the subsequently fine-tuned models.


\bibliographystyle{icml2025}
\bibliography{refs,brano}

\newpage
\appendix
\onecolumn

\onecolumn

\section{Related Works}
\label{sec:related}

\begin{algorithm}[h]
\caption{Inverse Propensity Sampling (IPS) via Kernel Density Estimation (KDE) \cite{sachdeva2021svp}}
\label{alg:IPS}
\begin{algorithmic}[1]
\State Dataset $\mathcal{D} = \{x_1, x_2, \ldots, x_N\}$ of embeddings, 
         sample size $k$, kernel $k$ with corresponding LSH family $\mathcal{H}$ 
         \cite{ColemanShrivastava2020}, hash range $B$, rows $R$, random seed $s$.
\State Ensure a subset of $\mathcal{D}$ of size $k$, sampled with probability $p$ (see line~\ref{line:final}).

\State Initialize KDE sketch $S \leftarrow \mathbf{0}^{R \times B}$.
\State Generate $R$ independent hash functions $h_1, \dots, h_R$ from $\mathcal{H}$ with range $B$ and random seed $s$.

\For{$n \gets 1$ to $N$}
  \For{$r \gets 1$ to $R$}
    \State $S_{r,\, h_r(x_n)} \leftarrow S_{r,\, h_r(x_n)} + 1$
  \EndFor
\EndFor

\State Initialize a list of scores $\mathcal{S} \gets [\,]$.

\For{$n \gets 1$ to $N$}
  \State $score \gets 0$
  \For{$r \gets 1$ to $R$}
    \State $score \gets score + S[r,\, h_r(x_n)]$
  \EndFor
  \State Append $\frac{score}{R}$ to $\mathcal{S}$.
\EndFor

\State \label{line:final}\textbf{Output:} Select $k$ elements from $\mathcal{D}$ with probability 
       $p = \frac{S}{\sum S}$ (sampled without replacement).

\end{algorithmic}
\end{algorithm}

Coverage-oriented approaches center on ensuring that a training set reflects the entire input distribution as broadly as possible. One common strategy is \emph{cluster sampling} \cite{lee2023beyond}, which embeds data points in a metric space (often via learned representations) and selects mutually distant examples to form “coresets” \cite{phillips2017coresets,tukan2021coresets}. Related methods include \emph{prototype-based sampling} for vision \cite{sorscher2022beyond} and \emph{deduplication algorithms} \cite{abbas2023,lee2022deduplicating,tirmala2023d4} that remove near-duplicates or redundancies. More sophisticated procedures—such as \emph{submodular optimization} \cite{chen2012super,indyk2014composable,borsos2020coresets} and \emph{discrepancy minimization} \cite{karnin2019discrepancy}—further refine coverage by balancing representation across diverse data regions.

Quality-based sampling, in contrast, prioritizes weeding out low-value or unhelpful examples. A prominent technique is \emph{perplexity filtering} \cite{wenzek2019ccnet,muenchigoff2023scaling}, which prefers samples with higher likelihood under a pretrained model, though this can inadvertently discard valuable but rare text. Other approaches compute “uncertainty scores” via ensemble disagreement \cite{chitta2021training,meding2021trivial} or examine whether examples are \emph{memorized} \cite{feldman2020what} or \emph{unlearnable} \cite{mindermann2022prioritized}. The \emph{SVP algorithm} \cite{coleman2020selection,sachdeva2021svp} estimates each sample’s importance by its validation-loss variance, while \emph{EL2N scores} \cite{paul2021deep} track a model’s difficulty in learning particular data points. These methods all fit into a “score-and-sample” framework \cite{hastings1970monte}, where the final selection depends on the magnitude of each item’s quality score.

For a more detailed description see \cite{sachdeva2024how}. Below we describe the two algorithms proposed in \cite{sachdeva2024how} and used as benchmarks in Section~\ref{sec:experiments}.

\textbf{ASK-LLM:} In ASK-LLM \cite{sachdeva2024how}, a proxy LLM is prompted with a potential training example and asked whether the example should be used for training. More specifically, the proxy LLM is provided the training example followed by the prompt "Does the previous paragraph contain informative signal for fine-tuning a large-language model?
An informative datapoint should be well-formatted, contain some usable knowledge of the world, and strictly NOT have any harmful, racist, sexist, etc. content. OPTIONS: yes, no".
It then takes the softmax probability of the token “yes” as the estimated data-quality score and sorts according to score to pick Top n data points.

\textbf{Density sampling:} \cite{sachdeva2024how} 
    assumes access to embeddings from a pre-trained LLM. Given a dataset \( D \) it uses a kernel \( k(x, y) \), to estimate the density using the following score.
    \[
    \text{score}(y) = \sum_{x \in D} k_{\lambda}(x, y),
    \]
    where \(\lambda\) is a smoothing parameter and controls the scale of the data points’ effects. Density Sampling then uses Inverse propensity sampling (IPS) to select items proportional to their re-weighted and normalized inverse score. The algorithm as provided in \cite{sachdeva2024how} is summarized below.

\textbf{Clustering Based Sensitivity Sampling:} \cite{axiotis2024data} The method uses k-means clustering and sensitivity sampling using the embedding representation of the data with respect to which the model loss is measured and ensures that the sampled elements' average loss corresponds to the average loss of the whole dataset. The algorithm as presented in \cite{axiotis2024data} is summarized below.
\begin{algorithm}[h]
\caption{Clustering Based Sensitivity Sampling (\( \mathcal{D}, k, \varepsilon, \Lambda, C \)) \cite{axiotis2024data}}
\begin{algorithmic}[1]
\State \textbf{Input:} a dataset \( \mathcal{D} \) partitioned into clusters \( C = (C_1, \dots, C_k) \) with centers \( c_1, \dots, c_k \) and a \( k \)-tuple of parameters \( \Lambda_1, \dots, \Lambda_k \).
\For{ \( e \in C_i \) }
    \State Define \( \hat{\ell}(e) := \ell(c_i) \) and \( v(e) := \| e - c_i \|^z \).
\EndFor
\State Let \( s := \lceil \varepsilon^{-2}(2 + 2\varepsilon / 3) \rceil \). For \( e \in C_i \) define \( p_e := \frac{\hat{\ell}(e) + \Lambda_i v(e)}{\sum_i \Lambda_i \Phi(C_i, \{c_i\}) + \sum_{x \in \mathcal{D}} \hat{\ell}(x)} \) and \( w(e) = s^{-1} p_e^{-1} \).
\State Compute a sample \( S \) of \( s \) points, picked independently following the distribution \( p_e \).
\State \textbf{Output:} the set \( S \) with weights \( w \).
\end{algorithmic}
\end{algorithm}

\section{Gradient and Hessian of the Loss}
\label{sec:app_grad}
\label{sec:grad_Hessian}
\begin{proposition}
    Consider the Loss function as defined in \eqref{eq:loglik_subset} and suppose assumption \ref{asmp:feature} holds. Then the gradient and Hessian of $\cL_{\cS}$ are respectively given by 
    \begin{align*}
         \nabla \cL_{\cS}(\Theta) = \frac{1}{n}\sum_{i \in \cS} \sum_{j \in [M_i]} \vec\Big(\x_{i,j} \otimes \big(p(y_{i,j} = \ell| \x_{i,j};\Theta) - \1(y_{i,j})\big)\Big) \\
        \nabla^2 \cL_{\cS}(\Theta) = \frac{1}{n} \sum_{i \in \cS} \sum_{j \in [M_i]} \Big(\mathbf{diag}(p(y_{i,j}| \x_{i,j};\Theta)) - p(y_{i,j}| \x_{i,j};\Theta)p(y_{i,j}| \x_{i,j};\Theta)^\top \Big) \otimes \x_{i,j}\x_{i,j}^\top
    \end{align*}
    \label{prop:grad_hessian}
\end{proposition}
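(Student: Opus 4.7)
The plan is to compute the per-sample gradient and Hessian of $-\log p(y_{i,j}\mid \x_{i,j};\Theta)$ as functions of the columns $\theta_1,\dots,\theta_L$ of $\Theta$ using standard softmax calculus, and then to assemble the $L$ column-blocks into the Kronecker-product forms stated in the proposition before summing over $(i,j)\in \cS \times [M_i]$ and rescaling by $1/n$.

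First I would expand $\log p(y_{i,j}\mid \x_{i,j};\Theta) = \theta_{y_{i,j}}^\top \x_{i,j} - \log\sum_k \exp(\theta_k^\top \x_{i,j})$ using \eqref{eq:softmax} and differentiate with respect to a single column $\theta_\ell$ to obtain
\[
\nabla_{\theta_\ell}\bigl(-\log p(y_{i,j}\mid \x_{i,j};\Theta)\bigr) = \bigl(p(\ell\mid \x_{i,j};\Theta) - \1\{y_{i,j}=\ell\}\bigr)\,\x_{i,j}.
\]
Stacking these $L$ column-gradients into a single vector via $\vec(\cdot)$ and using the outer-product/Kronecker identity $\vec(\x v^\top) = v\otimes \x$ (up to the fixed convention for $\vec(\Theta)$) yields the claimed gradient, namely $\x_{i,j}\otimes\bigl(p(\cdot\mid \x_{i,j};\Theta)-\1(y_{i,j})\bigr)$ summed over $(i,j)$ and divided by $n$.

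For the Hessian, I would differentiate one more time. The key elementary identity is
\[
\nabla_{\theta_k} p(\ell\mid \x_{i,j};\Theta) = p(\ell\mid \x_{i,j};\Theta)\bigl(\delta_{\ell k} - p(k\mid \x_{i,j};\Theta)\bigr)\,\x_{i,j},
\]
from which the $(\ell,k)$-block of the Hessian of $-\log p(y_{i,j}\mid \x_{i,j};\Theta)$ equals
\[
\bigl(p(\ell\mid \x_{i,j};\Theta)\,\delta_{\ell k} - p(\ell\mid \x_{i,j};\Theta)\,p(k\mid \x_{i,j};\Theta)\bigr)\,\x_{i,j}\x_{i,j}^\top.
\]
Reading these $L\times L$ blocks simultaneously recognizes the outer matrix $\diag(p(\cdot\mid \x_{i,j};\Theta)) - p(\cdot\mid \x_{i,j};\Theta)p(\cdot\mid \x_{i,j};\Theta)^\top$ tensored with $\x_{i,j}\x_{i,j}^\top$, which is exactly the Kronecker product in the claim. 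Averaging over $(i,j)\in\cS\times[M_i]$ by linearity of differentiation then gives the stated Hessian.

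The main (and essentially only) obstacle is notational bookkeeping: one must fix a vectorization convention for $\Theta$ (column-major stacking of the $\theta_\ell$'s) so that the block decomposition aligns with the Kronecker ordering $A\otimes \x_{i,j}\x_{i,j}^\top$ with $A = \diag(p)-pp^\top$, rather than the reverse ordering. With this convention fixed, the rest is routine chain-rule computation on the softmax and term-by-term summation, with no analytical subtleties beyond those already exploited in \cref{lemma:log-det-lower-bound}.
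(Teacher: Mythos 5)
Your proposal is correct and follows essentially the same route as the paper: rewrite the per-token log-likelihood as $\theta_{y_{i,j}}^\top\x_{i,j}-\log\sum_k\exp(\theta_k^\top\x_{i,j})$, differentiate once per column $\theta_\ell$ using the softmax/log-sum-exp derivative, differentiate again using $\nabla_{\theta_k}p_\ell = p_\ell(\delta_{\ell k}-p_k)\x$, and assemble the $L$-by-$L$ blocks into the stated Kronecker products before averaging over $(i,j)$. Your explicit remark about fixing the vectorization convention is a reasonable point of care (the paper's gradient is written as $\x\otimes(p-\1)$ while its Hessian is $(\diag p - pp^\top)\otimes\x\x^\top$, which implicitly swap the tensor order), but this does not affect the substance of either argument.
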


\begin{proof}
    Recall that the loss function is given by
    \begin{align*}
        \cL_{\cS}(\Theta) &= - \frac{1}{n} \sum_{i \in \cS} \sum_{j \in [M_i]} \sum_{\ell \in [L]} \log P(y_{i,j} = \ell | \x_{i,j}, \Theta) \delta(y_{i,j} = \ell)\\
        &= - \frac{1}{n} \sum_{i \in \cS} \sum_{j \in [M_i]} \sum_{\ell \in [L]} \log \left(\frac{\exp\big((\Theta^T\x_{i,j})_{\ell}\big)}{\overset{L}{\underset{\ell' = 1}{\sum}}\exp\big((\Theta^T\x_{i,j})_{\ell'}\big)}\right) \delta(y_{i,j} = \ell).
    \end{align*}


Now the loss can be re-written as 
\begin{align*}
\cL_{\cS}(\Theta) = \frac{-1}{n}\sum_{i \in \cS} \sum_{j \in [M_i]}  \left[ \theta_{y_{i,j}}^T \x_{i,j} - \log \sum_{\ell=1}^{L} \exp (\theta_{\ell}^T \x_n) \right]
\end{align*}
Now note that 
\begin{align*}
\frac{\partial}{\partial \theta_{\ell}} \theta_{y_{i,j}}^T \x_{i,j} = \delta(y_{i,j} = \ell) \x_{i,j}
\end{align*}
and that,
\begin{align*}
\frac{\partial}{\partial \theta_{\ell}} \log \sum_{{\ell'}=1}^{L} \exp (\theta_{{\ell}'}^T \x_{i,j}) &= \frac{\sum_{\ell'=1}^{L} \exp (\theta_{\ell'}^T \x_{i,j}) \times \delta(y_{i,j} = \ell) \x_{i,j}}{\sum_{k=1}^{L} \exp (\theta_{k}^T \x_{i,j})}\\
&= \sum_{{\ell'}=1}^{L} p(y_{i,j} = \ell'| \x_{i,j};\Theta) \delta(y_{i,j} = \ell) \x_{i,j}\\
&= p(y_{i,j} = \ell| \x_{i,j};\Theta) \x_{i,j}
\end{align*}
Combining both we get
\begin{align*}
    \frac{\partial}{\partial \theta_{\ell}}\cL_{\cS}(\Theta) = \frac{-1}{n}\sum_{i \in \cS} \sum_{j \in [M_i]}\Big(\delta(y_{i,j} = \ell) - p(y_{i,j} = \ell| \x_{i,j};\Theta)\Big) \x_{i,j}
\end{align*}
Therefore the gradient of the loss $\cL_{\cS}(\Theta)$ with respect to ${\Theta}$ is given by
\begin{align}
\label{eq:gradient_equation}
    \nabla \cL_{\cS}(\Theta) = \frac{-1}{n}\sum_{i \in \cS} \sum_{j \in [M_i]} \vec\Big(\x_{i,j} \otimes \big(\1(y_{i,j}) - p(y_{i,j} = \ell| \x_{i,j};\Theta)\big)  \Big)
\end{align}
where $\1(y_{i,j}) \in \R^L$ is a one-hot vector with the $y_{i,j}$-th entry as 1 and $\otimes$ is the Kronecker product.

Next we compute the Hessian. Note that 
\begin{align*}
    \frac{\partial^2}{\partial \theta_{\ell} \theta_{\ell'}}\cL_{\cS}(\Theta) &= \frac{-1}{n} \frac{\partial}{\partial \theta_{\ell}} \sum_{i \in \cS} \sum_{j \in [M_i]} \Big(\delta(y_{i,j} = \ell) - p(y_{i,j} = \ell| \x_{i,j};\Theta)\Big) \x_{i,j}\\
    &= \frac{1}{n} \sum_{i \in \cS} \sum_{j \in [M_i]} \Bigg(\frac{\partial}{\partial \theta_{\ell'}} p(y_{i,j} = \ell| \x_{i,j};\Theta)\Bigg) \x_{i,j}^T \\
    &= \frac{1}{n} \sum_{i \in \cS} \sum_{j \in [M_i]} p(y_{i,j} = \ell| \x_{i,j};\Theta)\Big(\delta(\ell = \ell') - p(y_{i,j} = \ell| \x_{i,j};\Theta)\Big)\x_{i,j}\x_{i,j}^\top
\end{align*}
and therefore, the Hessian of the loss is given by
\begin{align}
    \nabla^2 \cL_{\cS}(\Theta) = \frac{1}{n} \sum_{i \in \cS} \sum_{j \in [M_i]} \Big(\mathbf{diag}(p(y_{i,j}| \x_{i,j};\Theta)) - p(y_{i,j}| \x_{i,j};\Theta)p(y_{i,j}| \x_{i,j};\Theta)^\top \Big) \otimes \x_{i,j}\x_{i,j}^\top
    \label{eq:hessian_app}
\end{align}
Now note that $p(y_{i,j}| \x_{i,j};\Theta) \geq e^{-2\alpha}$ where $\sup_{\ell,i,j} \big|\Theta_{\ell}^\top \x_{i,j}\big| \leq \alpha$.

Therefore 
\begin{align}
    \nabla^2 \cL_{\cS}(\Theta) = \frac{1}{n} \sum_{i \in \cS} \sum_{j \in [M_i]} \Big(e^{-2\alpha} \mathbf{I}_{L\times L} - e^{-4\alpha} \ones\ones^{\top} \Big) \otimes \x_{i,j}\x_{i,j}^\top
    \label{eq:hessian_lower}
\end{align}
Assume $\Big(e^{-2\alpha} \mathbf{I}_{L\times L} - e^{-4\alpha} \ones\ones^{\top} \Big) \succeq \gamma \mathbf{I_{l\times L}}$ for some $\gamma > 0$. Then we have
\begin{align}
    \nabla^2 \cL_{\cS}(\Theta) \succeq \frac{1}{n} \sum_{i \in \cS} \sum_{j \in [M_i]} \gamma \mathbf{I_{L \times L}} \otimes \x_{i,j}\x_{i,j}^\top
    \label{eq:hessian_lower_gamma}
\end{align}
\end{proof}

\section{Proof of Error Bound}
\label{sec:app_proof}
\curvatureProp*
\begin{proof}

We derive an upper bound on $\normw{\x_{i,j}}{\bar{\Sigma}_n^{-1}}$, where $\x_{i,j} \in \realset^d$ is a feature vector and $\bar{\Sigma}_n \in \realset^{d \times d}$ is a design matrix obtained by greedy log-determinant maximization. Let $\cD = \set{\x_{i,j}: i\in [N], j \in [M_i]}$ be a dataset of $N$ data points such that $\normw{\x_{i,j}}{2} \leq 1$. Let $I_t \in [N]$ be the index of the $t$-th chosen feature vector and $\cS_t = \set{I_\ell}_{\ell = 1}^t$ be the first $t$ chosen feature vectors. For simplicity we use $\bar{\Sigma}_{\cS_n}$ and $\bar{\Sigma}_{n}$ interchangeably. Let
\begin{align*}
    \bar{\Sigma}_{t} = \sigma_0^2 \bm{I} + \sum_{i \in \cS_t} \sum_{j = 1}^{M_i} \x_{i,j}\x_{i,j}^\top
\end{align*}
where $\sigma_0 > 0$ is a constant that guarantees that $\Sigma_0$ is well defined.

The $t$-th feature vector is chosen as
\begin{align}
  I_t
  = \argmax_{i \in [N] \setminus \cS_{t - 1}}
  \log\det \left(\bar{\Sigma}_{t - 1} + \sum_{j=1}^{M_t}\x_{t,j}\x_{t,j}\T\right)\,.
  \label{eq:greedy logdet maximization}
\end{align}

\begin{lemma}
\label{lem:variance monotonicity} For any $i \in [N]$ and $t \in [n]$,
\begin{align*}
  \sum_{j=1}^{M_i}\x_{i,j}\T\bar{\Sigma}_t^{-1} \x_{i,j}
  \leq \sum_{j=1}^{M_i}\x_{i,j}\T\bar{\Sigma}_{t-1}^{-1} \x_{i,j}.
\end{align*}
\end{lemma}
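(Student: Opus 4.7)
The plan is to reduce the stated inequality to the standard Loewner-order fact that adding positive semidefinite matrices to a positive definite matrix only shrinks the inverse in the PSD sense. Since $\bar{\Sigma}_t$ and $\bar{\Sigma}_{t-1}$ are both positive definite (they dominate $\sigma_0^2 I_d$), the inequality for each $j$ will follow by applying a quadratic form to a PSD matrix inequality, and the stated result then follows by summing over $j \in [M_i]$.

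First I would write the one-step update explicitly:
\begin{align*}
\bar{\Sigma}_t \;=\; \bar{\Sigma}_{t-1} + \sum_{j=1}^{M_{I_t}} \x_{I_t,j} \x_{I_t,j}^\top,
\end{align*}
which is an immediate consequence of the definition of $\bar{\Sigma}_t$ and the fact that $\cS_t = \cS_{t-1} \cup \{I_t\}$. Since each rank-one term $\x_{I_t,j}\x_{I_t,j}^\top$ is positive semidefinite, the sum on the right is PSD, and therefore $\bar{\Sigma}_t \succeq \bar{\Sigma}_{t-1}$ in the Loewner order.

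Next, I would invoke the standard monotonicity of the matrix inverse on the cone of positive definite matrices: if $A \succeq B \succ 0$ then $B^{-1} \succeq A^{-1}$. Applying this to $A = \bar{\Sigma}_t$ and $B = \bar{\Sigma}_{t-1}$ (both are $\succeq \sigma_0^2 I_d \succ 0$) gives
\begin{align*}
\bar{\Sigma}_{t-1}^{-1} \;\succeq\; \bar{\Sigma}_t^{-1}.
\end{align*}
Taking the quadratic form against $\x_{i,j}$ yields $\x_{i,j}^\top \bar{\Sigma}_t^{-1} \x_{i,j} \le \x_{i,j}^\top \bar{\Sigma}_{t-1}^{-1} \x_{i,j}$ for every $j$, and summing over $j \in [M_i]$ gives the lemma.

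There is essentially no obstacle here; the only mild care point is to make sure both matrices are strictly positive definite so that the monotonicity of the inverse applies, which is guaranteed by the regularizer $\sigma_0^2 I_d$ in the definition of $\bar{\Sigma}_t$. If desired, one could replace the appeal to operator monotonicity with an explicit Sherman--Morrison computation that adds the $M_{I_t}$ rank-one updates one at a time and shows the quadratic form decreases by a nonnegative amount at each substep, but the Loewner-order argument is cleaner and gives the same result.
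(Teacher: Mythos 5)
Your proof is correct and reaches the same key intermediate inequality $\bar{\Sigma}_t^{-1} \preceq \bar{\Sigma}_{t-1}^{-1}$ as the paper, but by a slightly different mechanism: you invoke the standard antitone (operator-monotone decreasing) property of the matrix inverse on the positive definite cone, whereas the paper re-derives that fact from scratch via the Sherman--Morrison--Woodbury identity, writing $(\bar{\Sigma}_{t-1} + XX^\top)^{-1} = \bar{\Sigma}_{t-1}^{-1} - \bar{\Sigma}_{t-1}^{-1}X(I + X^\top\bar{\Sigma}_{t-1}^{-1}X)^{-1}X^\top\bar{\Sigma}_{t-1}^{-1}$ and observing the subtracted term is PSD. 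The two arguments are logically equivalent (the Woodbury computation is essentially a proof of the monotonicity fact in this rank-$M_{I_t}$ case), but your route is shorter and cites a well-known result directly; the paper's route is self-contained and makes the mechanism of the inverse shrinking explicit. You even anticipated the paper's alternative in your closing remark, so you clearly understood both. No gaps.
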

\begin{proof}
Define the matrix
\[
X \;=\; \begin{bmatrix}
\x_{i,1} & \x_{i,2} & \cdots & \x_{i,M_t}
\end{bmatrix},
\]
so that each $\x_{i,j}$ is a column of $X$. Then we can write
\[
\sum_{i=1}^{M_t} \x_{i,j}\,\x_{i,j}^T \;=\; X\,X^T.
\]
Hence we want to find the inverse of
\[
\Sigma_{t-1} + X\,X^T.
\]

Using \textbf{Sherman--Morrison--Woodbury identity}, which states that for an invertible matrix $A$ and any matrices $U, C, V$ of compatible dimensions (with $C$ also invertible), one has
\[
\bigl(A + U\,C\,V\bigr)^{-1}
\;=\;
A^{-1}
\;-\;
A^{-1} \,U\,\bigl(C^{-1} + V\,A^{-1}\,U\bigr)^{-1}\,V\,A^{-1}.
\]

In our case, we set
\[
A = \Sigma_{t-1}, \quad U = X, \quad C = I_{M_t}, \quad V = X^T,
\]
where $I_{M_t}$ is the $M_t\times M_t$ identity matrix. Then
\[
A + U\,C\,V \;=\; \Sigma_{t-1} + X\,I_n\,X^T \;=\; \Sigma_{t-1} + X\,X^T.
\]
By applying the identity, we get
\[
(\Sigma_{t-1} + X\,X^T)^{-1}
\;=\;
\Sigma_{t-1}^{-1}
\;-\;
\Sigma_{t-1}^{-1} \,X
\,\bigl(I_{M_t} + X^T\,\Sigma_{t-1}^{-1}\,X\bigr)^{-1}
\,X^T\,\Sigma_{t-1}^{-1}.
\]
which implies
\begin{align*}
  \bar{\Sigma}_t^{-1}
  \preceq \bar{\Sigma}_{t - 1}^{-1}\,,
\end{align*}
we get $v\T \Sigma_t^{-1} v \leq v\T \Sigma_{t - 1}^{-1} v$ for any vector $v \in \realset^d$. This concludes the proof.
\end{proof}

\noindent \cref{lem:variance monotonicity} implies that
\begin{align*}
  \sum_{j=1}^{M_i}\x_{i,j}\T\bar{\Sigma}_n^{-1} \x_{i,j}
  \leq \frac{1}{n} \sum_{t=1}^{n} \sum_{j=1}^{M_i}\x_{i,j}\T\bar{\Sigma}_{t}^{-1} \x_{i,j}.
\end{align*}
holds for any $i \in [N]$. This allows us to attribute the quality of the solution to individual greedy steps in \eqref{eq:greedy logdet maximization}. 


If the scope of the maximization was $i \in [N]$, the inequality $\sum_{j=1}^{M_i} \x_{i,j}\T \bar{\Sigma}_{t - 1}^{-1} \x_{i,j} \leq \sum_{j=1}^{M_t} \x_{I_t,j}\T \bar{\Sigma}_{t - 1}^{-1} \x_{I_t,j}$ would hold for any $i \in [N]$. Since the scope is $i \in [N] \setminus \cS_{t - 1}$, we make \cref{ass:diverse-feature-vectors}.


\noindent We also use the following logarithmic transformation.

\begin{lemma}
\label{lem:logarithmic transformation} For any $i \in [N]$ and $t \in [n]$,
\begin{align*}
  \sum_{j=1}^{M_i} \x_{i,j}\T \Sigma_{t - 1}^{-1} \x_{i,j}
  \leq \frac{\sigma_0^{-2} \log\left(1 + \frac{\sigma_0^{-2} n M}{d}\right)}
  {\log(1 + \sigma_0^{-2})} \frac{\kappa d}{n}\,.
\end{align*}
\end{lemma}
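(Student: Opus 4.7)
The plan is to chain four reductions: (a) rewrite the left-hand side as $\tr Z_{i,t}$ for a Woodbury-form matrix $Z_{i,t}$; (b) compare this trace to $\log\det(I_d+Z_{i,t})$ via an elementary eigenvalue inequality; (c) use Assumption~\ref{ass:diverse-feature-vectors} to bound $\log\det(I_d+Z_{i,t})$ by $\kappa$ times the greedy per-step log-determinant increment $\log\det\Sigma_t-\log\det\Sigma_{t-1}$; and (d) control this single-step increment by $\tfrac{1}{n}$ of the total volume growth of $\Sigma_n$, which trace/AM-GM bounds by $\tfrac{d}{n}\log(1+\sigma_0^{-2}nM/d)$.

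For steps (a)--(c), define $Z_{i,t}\coloneqq\Sigma_{t-1}^{-1/2}\bigl(\sum_{j=1}^{M_i}\x_{i,j}\x_{i,j}^\top\bigr)\Sigma_{t-1}^{-1/2}$, so that $\tr Z_{i,t}=\sum_{j=1}^{M_i}\x_{i,j}^\top\Sigma_{t-1}^{-1}\x_{i,j}$. Since $\Sigma_{t-1}\succeq\sigma_0^2 I_d$ and $\|\x_{i,j}\|\le 1$, every eigenvalue of $Z_{i,t}$ lies in $[0,\sigma_0^{-2}]$ (a rank-by-rank check using $\lambda_{\max}(\Sigma_{t-1}^{-1})\le\sigma_0^{-2}$ and the unit-norm assumption). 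The scalar inequality $x\le\frac{c}{\log(1+c)}\log(1+x)$ on $[0,c]$, applied eigenvalue-by-eigenvalue with $c=\sigma_0^{-2}$, then yields
$$
\sum_{j=1}^{M_i}\x_{i,j}^\top\Sigma_{t-1}^{-1}\x_{i,j}\;\le\;\frac{\sigma_0^{-2}}{\log(1+\sigma_0^{-2})}\,\log\det(I_d+Z_{i,t}).
$$
Assumption~\ref{ass:diverse-feature-vectors} gives $\log\det(I_d+Z_{i,t})\le\kappa\log\det(I_d+Z_{I_t,t})$ (the greedy rule handles $i\notin\cS_{t-1}$ already with $\kappa=1$, and the assumption handles $i\in\cS_{t-1}$), while Sylvester's determinant identity rewrites $\log\det(I_d+Z_{I_t,t})=\log\det\Sigma_t-\log\det\Sigma_{t-1}$. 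This reduces the problem to bounding a single per-step log-$\det$ increment.

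For step (d), the trace bound $\tr\Sigma_n\le d\sigma_0^2+nM$ combined with AM-GM on the eigenvalues of $\Sigma_n$ gives $\log\det\Sigma_n-\log\det\Sigma_0\le d\log\bigl(1+\sigma_0^{-2}nM/d\bigr)$. The main obstacle is showing the single-step increment $g_t\coloneqq\log\det\Sigma_t-\log\det\Sigma_{t-1}$ is at most $\tfrac{1}{n}$ of this total. I would argue this by combining two ingredients: the monotonicity $\Sigma_t^{-1}\preceq\Sigma_{t-1}^{-1}$ used in the proof of Lemma~\ref{lem:variance monotonicity}, which together with greediness forces the sequence $\{g_t\}$ to be non-increasing in $t$; and a second application of Assumption~\ref{ass:diverse-feature-vectors} applied with $i=I_s$ for $s<t$ (so $I_s\in\cS_{t-1}$), which, after noting $\log\det(I_d+Z_{I_s,t})\le g_s$ by Sylvester plus monotonicity, produces a $\kappa$-balance between the increments across time. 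Absorbing the resulting $\kappa$-factor into the leading constant yields $g_t\le\tfrac{d}{n}\log(1+\sigma_0^{-2}nM/d)$ for every $t\in[n]$, and substituting back through the displays above gives the claimed bound.
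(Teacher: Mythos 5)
Your steps (a)--(c) follow the same general route as the paper (scalar inequality $x\le\tfrac{c}{\log(1+c)}\log(1+x)$, then \cref{ass:diverse-feature-vectors}, then the Sylvester/telescoping identity $\log\det(I_d+Z_{I_t,t})=\log\det\Sigma_t-\log\det\Sigma_{t-1}$), but step (d) is a genuine gap that cannot be closed. Your observation that the greedy gains $g_t=\log\det\Sigma_t-\log\det\Sigma_{t-1}$ are non-increasing is correct (submodularity plus the greedy rule), but it only yields $g_t\le\tfrac{1}{t}\sum_{s\le t}g_s\le\tfrac{d}{t}\log\bigl(1+\sigma_0^{-2}tM/d\bigr)$, which is far larger than the target $\tfrac{d}{n}\log\bigl(1+\sigma_0^{-2}nM/d\bigr)$ when $t\ll n$. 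The proposed ``second application'' of \cref{ass:diverse-feature-vectors} cannot repair this: the assumption compares, at a \emph{fixed} step $t$, the gain of a previously selected sentence against the gain of the sentence selected at that same step; it gives no relation between gains at different steps, and no $\kappa$-factor bookkeeping can force an early increment down to the level of the average of all $n$ increments. Indeed, the per-$t$ statement you are trying to prove is false in general: at $t=1$ the left-hand side equals $\sigma_0^{-2}\sum_{j=1}^{M_i}\|\x_{i,j}\|_2^2$, a quantity independent of $n$, while the right-hand side tends to $0$ as $n\to\infty$. What the analysis actually needs (and what the paper establishes inside the proof of \cref{prop:curvatureLemma}) is the \emph{averaged} version: \cref{lem:variance monotonicity} reduces $\sum_j\x_{i,j}\T\bar{\Sigma}_n^{-1}\x_{i,j}$ to $\tfrac{1}{n}\sum_{t=1}^n\sum_j\x_{i,j}\T\bar{\Sigma}_{t-1}^{-1}\x_{i,j}$, and only then do the $\kappa$-comparison and the telescoping bound $\sum_t g_t\le d\log\bigl(1+\sigma_0^{-2}nM/d\bigr)$ enter; the paper's own proof of the present lemma, taken in isolation, supplies only the scalar logarithmic transformation, with the $\kappa d/n$ factor emerging after that average over $t$.

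There is also a technical flaw in step (b): the eigenvalues of $Z_{i,t}=\Sigma_{t-1}^{-1/2}\bigl(\sum_{j=1}^{M_i}\x_{i,j}\x_{i,j}\T\bigr)\Sigma_{t-1}^{-1/2}$ lie in $[0,\sigma_0^{-2}M_i]$, not $[0,\sigma_0^{-2}]$; each rank-one summand has top eigenvalue at most $\sigma_0^{-2}$, but the sum of $M_i$ of them can reach $\sigma_0^{-2}M_i$ (e.g., repeated tokens). Applying the scalar inequality eigenvalue-by-eigenvalue therefore forces the larger constant $\tfrac{\sigma_0^{-2}M}{\log(1+\sigma_0^{-2}M)}$, so the constant in your display is not justified. (The paper instead applies the scalar inequality token-by-token, where $\x_{i,j}\T\Sigma_{t-1}^{-1}\x_{i,j}\le\sigma_0^{-2}$ does hold, and then pays an explicit factor $M$ through $\sum_j\log\det(I_d+\Sigma_{t-1}^{-1/2}\x_{i,j}\x_{i,j}\T\Sigma_{t-1}^{-1/2})\le M\log\det(I_d+Z_{i,t})$, which is why the bound carried into \cref{prop:curvatureLemma} contains $\kappa dM/n$. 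Your matrix-level trace-versus-$\log\det$ comparison is a legitimate, even slightly sharper, alternative for this part once the eigenvalue range is corrected --- but it does not rescue step (d).)
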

\begin{proof}
We start with an upper bound on $\sum_{j=1}^{M_i} \x_{i,j}\T \Sigma_{t - 1}^{-1} \x_{i,j}$. By Weyl's inequalities, we have
\begin{align*}
  \lambda_1(\Sigma_{t - 1}^{-1})
  = \lambda_d^{-1}(\Sigma_{t - 1})
  \leq \lambda_d^{-1}(\sigma_0^{2} I_d)
  = \sigma_0^{-2}\,.
\end{align*}
Therefore, under the assumption that $\normw{\x_{i,j}}{2} \leq 1$, we have $\sum_{j=1}^{M_i} \x_{i,j}\T \Sigma_{t - 1}^{-1} \x_{i,j} \leq \sigma_0^{-2} M_i$. Now note that for any $x \in [0, u]$,
\begin{align*}
  x
  = \frac{x}{\log(1 + x)} \log(1 + x)
  \leq \left(\max_{x \in [0, u]} \frac{x}{\log(1 + x)}\right) \log(1 + x)
  = \frac{u}{\log(1 + u)} \log(1 + x)\,.
\end{align*}
Finally, we set $x = \sum_{j=1}^{M_i} \x_{i,j}\T \Sigma_{t - 1}^{-1} \x_{i,j}$ and $u = \sigma_0^{-2} M_i$, and get our claim.
\end{proof}

\begin{assumption}
\label{ass:diverse feature vectors} There exists a constant $\kappa \geq 1$ such that
\begin{align*}
\log\det(I_d + \sum_{j=1}^{M_i} \Sigma_{t-1}^{-1/2} x_{i,j}x_{i,j}^\top \Sigma_{t-1}^{-1/2})
& \leq \kappa \log\det(I_d + \sum_{j=1}^{M_{I_t}} \Sigma_{t-1}^{-1/2} x_{I_t,j}x_{I_t,j}^\top \Sigma_{t-1}^{-1/2})
\end{align*}
holds for any $i \in \mathcal{S}_{t-1}$ and $t \in [n]$.
\end{assumption}

\noindent Now we apply \cref{ass:diverse feature vectors} and \cref{lem:logarithmic transformation}, use the telescoping property of the sum, and $M = \max_{i \in [N]} M_i$ to get

\begin{align*}
  \sum_{t = 1}^n \sum_{j=1}^{M_i} \x_{i,j}\T \Sigma_{t - 1}^{-1} \x_{i,j}
  &\leq \sum_{t=1}^n \sum_{j=1}^{M_i} \frac{\sigma_0^{-2}}{\log(1+\sigma_0^{-2})} \log(1 + x_{i,j}^\top \Sigma_{t-1}^{-1}  x_{i,j}) \\
  &\leq \frac{\sigma_0^{-2}}{\log(1+\sigma_0^{-2})} \sum_{t=1}^n \sum_{j=1}^{M_i}  \log\det(I_{d} + \Sigma_{t-1}^{-1/2} x_{i,j}x_{i,j}^\top \Sigma_{t-1}^{-1/2}) \\
  &\leq \frac{\sigma_0^{-2} M_i}{\log(1+\sigma_0^{-2})}  \sum_{t=1}^n  \log\det(I_d + \frac1{M_i} \sum_{j=1}^{M_i} \Sigma_{t-1}^{-1/2} x_{i,j}x_{i,j}^\top \Sigma_{t-1}^{-1/2}) \\
  & \leq \frac{\sigma_0^{-2} M}{\log(1+\sigma_0^{-2})}  \sum_{t=1}^n  \log\det(I_d + \sum_{j=1}^{M_i} \Sigma_{t-1}^{-1/2} x_{i,j}x_{i,j}^\top \Sigma_{t-1}^{-1/2}) \\
  & \leq \frac{\sigma_0^{-2} M}{\log(1+\sigma_0^{-2})}  \sum_{t=1}^n  \kappa \log\det(I_d + \sum_{j=1}^{M_{I_t}} \Sigma_{t-1}^{-1/2} x_{I_t,j}x_{I_t,j}^\top \Sigma_{t-1}^{-1/2}) \\
  & = \frac{\kappa \sigma_0^{-2} M}{\log(1+\sigma_0^{-2})} \sum_{t=1}^n  \log\det(\Sigma_{t-1} + \sum_{j=1}^{M_{I_t}} x_{I_t,j}x_{I_t,j}^\top) - \log\det(\Sigma_{t-1}) \\
  & = \frac{\kappa \sigma_0^{-2} M}{\log(1+\sigma_0^{-2})} \sum_{t=1}^n \log\det(\Sigma_{t}) - \log\det(\Sigma_{t-1}) \\
  & = \frac{\kappa \sigma_0^{-2} M}{\log(1+\sigma_0^{-2})} (\log\det(\Sigma_{n}) - \log\det(\Sigma_{0})) \\
  & = \frac{\kappa \sigma_0^{-2} M}{\log(1+\sigma_0^{-2})} (\log\det(\Sigma_{n}) - d \log(\sigma_{0}^2))
\end{align*}

Furthermore,
\begin{align*}
  \log\det(\Sigma_n )
  & \leq d \log\left(\frac{1}{d}
  \trace\left(\Sigma_n\right)\right)
  = d \log\left(1 + \frac{1}{d} \sum_{t = 1}^n
  \trace\left(\sum_{j=1}^{M_{I_t}} \x_{I_t,j} \x_{I_t,j}^\top
  \right)\right) \\
  & = d \log\left(\sigma_0^2 I_d + \frac{1}{d} \sum_{t = 1}^n
  \sum_{j=1}^{M_{I_t}} \x_{I_t,j}\T x_{I_t,j}\right)
  \leq d \log\left( \sigma_0^{2} + \frac{n M}{d}\right)\,.
\end{align*}
Finally, we combine all claims and get
\begin{align*}
  \max_{i \in [N]} \sum_{j=1}^{M_i} \x_{i,j}\T \Sigma_{n}^{-1} \x_{i,j}
  \leq \frac{\kappa}{n} \frac{\sigma_0^{-2} M}{\log(1+\sigma_0^{-2})} (d \log\det(\frac1d \mathrm{tr}(\sum_{t=1}^n \sum_{j=1}^{M_{I_t}} x_{i,j} x_{i,j}^\top)) - d\log(\sigma_{0}))
  \leq \frac{\sigma_0^{-2} \log\left(1 + \frac{\sigma_0^{-2} n M}{d}\right)}
  {\log(1 + \sigma_0^{-2})} \frac{\kappa d}{n}\,.
\end{align*}
This concludes the proof.

\end{proof}
\stronglyConvexLemma*
\begin{proof}
    Using Taylor's expansion
\begin{align*}
    \cL_{\cS}(\Theta^*) + \langle \nabla \cL_{\cS}(\Theta_*), \hat{\Theta} - \Theta^* \rangle + \langle \hat{\Theta} - \Theta^*, \nabla^2 \cL_{\cS}(\Theta), \hat{\Theta} - \Theta^* \rangle = \cL_{\cS}(\hat{\Theta})
\end{align*}
The Hessian is given by
\begin{align*}
    \nabla^2 \cL_{\cS}(\Theta) &= \frac{1}{n} \sum_{i \in \cS} \sum_{j=1}^{M_i} (\text{diag}(\mathbf{p}_{i,j}) - \mathbf{p}_{i,j}\mathbf{p}_{i,j}^\top) \otimes (\x_{i,j}\x_{i,j}^\top)
\end{align*}
where $\mathbf{p}_{i,j} = p(y_{i,j}| \x_{i,j};\Theta)$. Now using Claim 1 from \cite{hajek} we have
\begin{align*}
    e^{2\alpha} (\text{diag}(\mathbf{p}_{i,j}) - \mathbf{p}_{i,j}\mathbf{p}_{i,j}^\top) \succeq \frac{1}{L} \mathbf{I}_L + \frac{1}{L^2} \1\1^{\top}
\end{align*}
where $\alpha = \max_{i,j} |\theta_{*,y_{i,j}}^\top \x_{i,j}| \leq 1$. Therefore we have
\begin{align*}
    \nabla^2 \cL_{\cS}(\Theta) &= \frac{1}{n} \sum_{i \in \cS} \sum_{j=1}^{M_i} (\text{diag}(\mathbf{p}_{i,j}) - \mathbf{p}_{i,j}\mathbf{p}_{i,j}^\top) \otimes (\x_{i,j}\x_{i,j}^\top)\\
    & \succeq \frac{1}{n} \sum_{i \in \cS} \sum_{j=1}^{M_i} \left( \frac{e^{-2\alpha}}{L} \bI_{L\times L} - \frac{e^{-2\alpha}}{L^2} \1\1^{\top} \right) \otimes (\x_{i,j}\x_{i,j}^\top)
\end{align*}
Now consider $\langle \hat{\Theta} - \Theta^*, \nabla^2 \cL_{\cS}(\Theta), \hat{\Theta} - \Theta^* \rangle$. We can express this as follows:
\begin{align*}
    \langle \hat{\Theta} - \Theta^*, \nabla^2 \cL_{\cS}(\Theta), \hat{\Theta} - \Theta^* \rangle & = \frac{1}{n} \sum_{i \in \cS} \sum_{j=1}^{M_i} \sum_{k,k'} \left(\sqrt{\text{diag}(p_{i,j})}\Delta \Theta_{\cdot,k}\right)^\top\left(\sqrt{\text{diag}(p_{i,j})}\Delta \Theta_{\cdot,k'}\right) \left(\x_{i,j} \x_{i,j}^\top\right)_{k,k'} \\
    & \qquad - \langle \Delta\Theta_{\cdot,k}^{\top} p_{i,j}, \Delta\Theta_{\cdot,k'}^{\top} p_{i,j}\rangle (\x_{i,j} \x_{i,j}^{\top})_{k,k'}\\
    & = \frac{1}{n} \sum_{i \in \cS} \sum_{j=1}^{M_i} \Bigg( \text{Tr}\Big(\sqrt{\text{diag}(p_{i,j})}\Delta \Theta^\top (\x_{i,j} \x_{i,j}^\top) \sqrt{\text{diag}(p_{i,j})}\Delta \Theta \Big) \\
    & \qquad\qquad - \text{Tr}\Big(p_{i,j}^\top\; \Delta\Theta\; \x_{i,j}\x_{i,j}^{\top}\;\Delta\Theta\; p_{i,j}\Big) \Bigg)\\
    & = \frac{1}{n} \sum_{i \in \cS} \sum_{j=1}^{M_i} \text{Tr} \bigg( \x_{i,j}^\top \Delta\Theta \Big(\text{diag}(p_{i,j}) - p_{i,j}p_{i,j}^\top \Big) \Delta\Theta^\top \x_{i,j}  \bigg)\\
    & \geq \frac{1}{n} \sum_{i \in \cS} \sum_{j=1}^{M_i} \text{Tr} \bigg( \x_{i,j}^\top \Delta\Theta \left( \frac{e^{-2\alpha}}{L} \bI_{L\times L} - \frac{e^{-2\alpha}}{L^2} \1\1^{\top} \right) \Delta\Theta^\top \x_{i,j}  \bigg)
\end{align*}
Now observe that $\Delta \Theta \1 = 0$ follows from Assumption~\ref{asmp:feature} and solution $\hat{\Theta}$. Therefore,
\begin{align*}
    \langle \hat{\Theta} - \Theta^*, \nabla^2 \cL_{\cS}(\Theta), \hat{\Theta} - \Theta^* \rangle & \geq \frac{e^{-2\alpha}}{nL} \sum_{i \in \cS} \sum_{j=1}^{M_i} \text{Tr} (\Delta\Theta^\top \x_{i,j}\x_{i,j}^\top \Delta\Theta )\\
    & = \frac{e^{-2\alpha}}{L} \text{Tr}(\Theta^\top \Sigma_{\cS} \Theta)\\
    & = \frac{e^{-2\alpha}}{L} \text{Tr}\Big(\Theta^\top \sqrt{\Sigma_{\cS}} \sqrt{\Sigma_{\cS}} \Theta\Big)\\
    & = \frac{e^{-2\alpha}}{L} \|\Sigma \Delta \Theta\|_{F}^2\\
    & \geq \frac{e^{-2\alpha}}{L^2}\bigg(\sum_{\ell} \|\Delta \theta_{\ell}\|_{\Sigma_{\cS}}\bigg)^2
\end{align*}
\end{proof}

\gradientBound*
\begin{proof}
First observe that $\|\nabla_{\ell} \cL_{\cS}(\Theta)\|_{\bar{\Sigma}_{\cS_n}^{-1}}^2 = n \|\nabla_{\ell} \cL_{\cS}(\Theta)\|_{{\Sigma}_{\cS_n}^{-1}}^2$ where ${\Sigma}_{\cS_n} = \frac{1}{n} \bar{\Sigma}_{\cS_n}$. Next recall that the gradient is given by
\begin{align*}
        \nabla \cL_{\cS}(\Theta)
        &= \frac{1}{n}\sum_{i \in \cS} \sum_{j \in [M_i]} \vec\Big(\x_{i,j} \otimes \big(p(y_{i,j}| \x_{i,j};\Theta) - \1(y_{i,j})\big) \Big).
\end{align*}
Therefore
\begin{align*}
    \nabla_{\ell} \cL_{\cS}(\Theta)
        &= \frac{1}{n}\sum_{i \in \cS} \sum_{j \in [M_i]} \x_{i,j}  \big(p(y_{i,j} = \ell| \x_{i,j};\Theta) - \I(y_{i,j} = \ell)\big).
\end{align*}
Define $X \in \R^{n M_i \times d}$ as the matrix whose rows are $\x_{i,j}, i \in \cS, j \in [M_i]$, and $V^{\ell}$ be the $n M_i$ dimensional vector whose entries are $p(y_{i,j} = \ell| \x_{i,j};\Theta) - \I(y_{i,j} = \ell)$, i.e.,
\begin{align*}
    V^{\ell}_{ij} = \frac{\exp(\Theta_{\ell}^\top \x_{i,j})}{\sum_{\ell' = 1}^{L}\exp(\Theta_{\ell'}^\top \x_{i,j})} - \I(y_{i,j} = \ell).
\end{align*}
Note that $\E [V^{\ell}] = 0$ and $\big|V^{\ell}_{ij}\big| \leq 2$, which implies $V$ is 4 sub-Gaussian. Therefore
\begin{align*}
    \big\|\nabla_{\ell} \cL_{\cS}(\Theta) \big\|^2_{\Sigma_{\cS}^{-1}} = \frac{1}{n^2} (V^{\ell})^{\top} X \Sigma_{\cS} X^\top V^{\ell} \leq \frac{1}{n} \|V^{\ell}\|^2_2
\end{align*}
Using Bernstein's inequality, with probability $1-\delta,$ for some constant $C>0$
\begin{align*}
    \big\|\nabla_{\ell} \cL_{\cS}(\Theta) \big\|^2_{\Sigma_{\cS}^{-1}} \leq C \; \frac{(d + \log(1/\delta))}{n}
\end{align*}
Taking a union bound over all $\ell \in [L]$ we have with probability $1-\delta$, for some constant $C>0$
\begin{align*}
    \sup_{\ell \in [L]} \big\|\nabla_{\ell} \cL_{\cS}(\Theta) \big\|^2_{\Sigma_{\cS}^{-1}} \leq C \; \frac{(d + \log(L/\delta))}{n}
\end{align*}
which implies we have with probability $1-\delta$, for some constant $C>0$
\begin{align*}
    \sup_{\ell \in [L]} \big\|\nabla_{\ell} \cL_{\cS}(\Theta) \big\|_{\bar{\Sigma}_{\cS}^{-1}} \leq C \; \sqrt{(d + \log(L/\delta))}
\end{align*}
\end{proof}

\end{document}